\newtheorem{thm}{Theorem}
\newtheorem{theorem}[thm]{Theorem}
\newtheorem{lem}[thm]{Lemma}
\newtheorem{corollary}[thm]{Corollary}
\newtheorem{proposition}[thm]{Proposition}
\newtheorem{definition}[thm]{Definition}
\newcommand{\graph}{\mathcal{G}}  
\newcommand{\spine}{\mathcal{S}}  
\newcommand{\rst}{\mathcal{R}}  
\newcommand{\cutG} [1] [\bm{u}] {\Phi_{\mathcal{G}} ({#1})}
\newcommand{\cutS} [1] [\bm{u}] {\Phi_{\spine} ({#1})}
\newcommand{\sset}{K}
\newcommand{\At} [1][t] {\mathcal{A}_{#1}}
\newcommand{\predyt}{\hat{y}_{t}}
\newcommand{\yt}{y_{t}}
\newcommand{\mut} [1][t] {\bmu_{#1}}
\newcommand{\bomt} [1][t] {\bom_{#1}}
\newcommand{\RE} [2] [\bmu_{t}] {d(#1, #2)}
\newcommand{\wt} [1][t]{\bm{w}_{#1}}
\newcommand{\wtm} [1][t]{\dot{\bm{w}}_{#1}}
\newcommand{\resistdiam}{R_\graph}
\newcommand{\omdot}[1][t]{\dot{\omega}_{{#1},\Specna}}
\newcommand{\bomdot}[1][t]{\dot{\bm{\omega}}_{#1}}
\newcommand{\mupredyt}[1][t]{\hat{y}_{\bmu_{#1}}}
\newcommand{\contset}[1][t]{\Psi_{#1}}
\newcommand{\newset}[1][t]{\Omega_{#1}}
\newcommand{\forgetset}[1][t]{\Theta_{#1}}
\newcommand{\bmutAt}[1][t]{\bmu_{#1}(\mathcal{A}_{#1})}
\newcommand{\bmutYt}[1][t]{\bmu_{#1}(\mathcal{Y}_{#1})}
\newcommand{\entropy}[1][t]{H(\bmu_{#1})}
\newcommand{\J}[1][\spset]{J_{#1}}
\newcommand{\cut}{\Phi}
\newcommand{\rcut}{\Phi^r}
\newcommand{\cutSpine}{\cut_{\spine}}
\newcommand{\transpose}{\top}
\newcommand{\Sp}{\alpha}
\newcommand{\pred}{\hat{y}}
\newcommand{\predt}{\hat{y_t}}
\newcommand{\Ct}[1][t]{\mathcal{Y}_{#1}}
\newcommand{\spset}{\mathcal{E}}    
\newcommand{\zzz}{\square}  
\newcommand{\sign}{\operatorname{sign}}
\newcommand{\Spec}[3]{\varepsilon^{#1,#2}_{#3}}
\newcommand{\Specna}{\varepsilon}
\newcommand{\se}{\Specna}
\newcommand{\FB}{\mathcal{F}}
\newcommand{\BTB}{\mathcal{B}}
\newcommand{\bK}{\bm{K}}
\newcommand{\bL}{\bm{L}}
\newcommand{\be}{\bm{e}}
\newcommand{\bp}{\bm{p}}
\newcommand{\bu}{\bm{u}}
\newcommand{\bom}{\bm{\omega}}
\newcommand{\bmu}{\bm{\mu}}
\newcommand{\cC}{\mathcal{C}}
\newcommand{\cG}{\mathcal{G}}
\newcommand{\cI}{\mathcal{I}}
\newcommand{\G}{\graph}
\newcommand{\cO}{\mathcal{O}}
\newcommand{\simplex}{{\Delta_n}}   
\newcommand{\into}{\rightarrow}
\newcommand{\argmin}{\operatornamewithlimits{argmin}}
\newcommand{\Ive}[1]{[#1]}
\newcommand{\FS}{SCS-F}
\newcommand{\BS}{SCS-B}
\newcommand{\SCS}{SCS}
\newcommand{\largediam}{Herbster-LargeDiameter}
\newcommand{\wta}{CGVZ10}
\newcommand{\pseminorm}{HL09}
\newcommand{\treeopt}{CesaBianchi-FastOptimalTrees}
\newcommand{\shazoo}{VCGZ11}
\newcommand{\belkinniyogimanifold}{BN04}
\newcommand{\harmoniczhu}{ZhuGL03}
\newcommand{\spec}{Freund-Specialists}
\newcommand{\nicolobook}{nicolobook}
\title{Online Prediction of Switching Graph Labelings with Cluster Specialists}
\author{%
	Mark Herbster \\
	Department of Computer Science\\
	University College London\\
	London \\
	United Kingdom\\
	\texttt{m.herbster@cs.ucl.ac.uk} \\
	\And
	James Robinson \\
	Department of Computer Science\\
	University College London\\
	London \\
	United Kingdom\\
	\texttt{j.robinson@cs.ucl.ac.uk}
}
\begin{document}
	
	\maketitle
	\begin{abstract}
		We address the problem of predicting the labeling of a graph in an online setting when the labeling is changing over time.  
		We present an algorithm based on a {\em specialist}~\cite{Freund-Specialists} approach; we develop the machinery of cluster specialists which probabilistically exploits the cluster structure in the graph. 	
		Our algorithm has two variants, one of which surprisingly only requires $\cO(\log n)$ time on any trial $t$ on an $n$-vertex graph, an exponential speed up over existing methods.
		We prove switching mistake-bound guarantees for both variants of our algorithm.
		Furthermore these mistake bounds {\em smoothly} vary with the magnitude of the change between successive labelings. 
		We perform experiments on Chicago Divvy Bicycle Sharing data and show that our algorithms significantly outperform an existing algorithm (a kernelized Perceptron) as well as several natural benchmarks.
	\end{abstract}
	
	\section{Introduction}\label{sec:introduction}
	We study the problem of predicting graph labelings that evolve over time.  Consider the following game for predicting the labeling of a graph in the online setting.  {\tt Nature} presents a graph $\cG$; {\tt Nature} queries a vertex $i_1 \in V = \{1,2,\ldots,n\}$; the {\tt learner} predicts the label of the vertex $\pred_1\in\{-1,1\}$; {\tt Nature} presents a label $y_1$; {\tt Nature} queries a vertex ${i_2}$; the {\tt learner} predicts $\pred_2$; and so forth. The {\tt learner}'s goal is to minimize the total number of mistakes $M = |\{t: \predt \ne y_t\}|$. If {\tt Nature} is strictly adversarial, the {\tt learner} will incur a mistake on every trial, but if {\tt Nature} is regular or simple, there is hope that the {\tt learner} may incur only  a few mistakes. Thus, a central goal of mistake-bounded online learning is to design algorithms whose total mistakes can be bounded relative to the complexity of {\tt Nature}'s labeling.  This (non-switching) graph labeling problem has been studied extensively in the online learning literature~\citep{\largediam,\pseminorm,CGVZ10,VCGZ11,HPG15}.  In this paper we generalize the setting to allow the underlying labeling to change arbitrarily over time.  The {\tt learner}  has no knowledge of when a change in labeling will occur and therefore must be able to adapt quickly to these changes.

	Consider an example of services placed throughout a city, such as public bicycle sharing stations. As the population uses these services the state of each station--such as the number of available bikes--naturally evolves throughout the day, at times gradually and others abruptly, and we might want to predict the state of any given station at any given time. Since the location of a given station as well as the state of nearby stations will be relevant to this learning problem it is natural to use a graph-based approach.
	Another setting might be a graph of major road junctions (vertices) connected by roads (edges), in which one wants to predict whether or not a junction is congested at any given time. Traffic congestion is naturally non-stationary and also exhibits both gradual and abrupt changes to the structure of the labeling over time~\citep{K98}.

	The structure of this paper is as follows. In Section~\ref{sec:background} we discuss the background literature.
	In Section~\ref{sec:switching-specialists} we present the {\sc Switching Cluster Specialists} algorithm (\SCS), a modification of the method of specialists~\citep{Freund-Specialists} with the novel machinery of \textit{cluster specialists}, a set of specialists that in a rough sense correspond to clusters in the graph. We consider two distinct sets of specialists, $\BTB_n$ and $\FB_n$, where $\BTB_n \subset \FB_n$. 
	With the smaller set of specialists the bound is only larger by factor of $\log n$.  On the other hand, prediction is exponentially faster per trial, remarkably requiring only $\mathcal{O}(\log n)$ time to predict.  
	In Section~\ref{sec:Experiments} we provide experiments on Chicago Divvy Bicycle Sharing data.  In Section~\ref{sec:conclusion} we provide some concluding remarks.  All proofs are contained in the technical appendices.
	
	\subsection{Notation}
	We first present common notation.
	Let $\graph=(V,E)$ be an undirected, connected, $n$-vertex graph with vertex set $V = \{1,2,\ldots,n\}$ and edge set $E$.   
	Each vertex of this graph may be labeled with one of two states $\{-1,1\}$ and thus a  labeling of a graph may be denoted by a vector $\bu\in\{-1,1\}^n$ where  $u_i$ denotes the label of vertex~$i$. 
	The underlying assumption is that we are predicting vertex labels from a sequence $\bu_1,\ldots,\bu_T\in\{-1,1\}^n$ of graph labelings over $T$ trials. 
	The set $\sset :=\{t \in \{2,\ldots,T\} :  \bu_t \neq \bu_{t-1}\}\cup\{1\}$ contains the first trial of each of the $|K|$ ``segments'' of the prediction problem.  Each segment corresponds to a time period when the underlying labeling is unchanging. 
	The {\em cut-size} of a labeling $\bu$ on a graph $\graph$ is defined as $\cut_{\graph}(\bu) := |\{(i,j)\in E : u_{i}\neq u_{j}\}|$, i.e., the number of edges between vertices of disagreeing labels. 
	
		
	We let $r_{\graph}(i,j)$ denote the \textit{resistance distance} (effective resistance) between vertices $i$ and $j$ when the graph $\graph$ is seen as a circuit where each edge has unit resistance (e.g.,~\cite{Klein-ResistanceDistance}). 
	The \textit{resistance diameter} of a graph is $\resistdiam := \max\limits_{i,j\in V}r_{\graph}(i,j)$.   The {\em resistance weighted} cut-size of a labeling $\bu$ is $\rcut_{\graph}(\bu) :=\!\!\!\!\!\!\!\!\!\!\sum\limits_{(i,j)\in E : u_{i}\neq u_{j}}\!\!\!\!\!\!\!\!\!\! r_{\graph}(i,j)$.      
	Let $\simplex = \left\{\bmu \in\left[0,1\right]^{n}: \sum_{i=1}^{n}\mu_{i} = 1\right\}$ be the $n$-dimensional probability simplex.  For $\bmu\in\simplex$ we define $\entropy[]:= \sum_{i=1}^{n}\mu_{i}\log_2{\frac{1}{\mu_i}}$ to be the entropy of $\bmu$. For $\bmu,\bom\in\simplex$ we define $\RE[\bmu]{\bom} = \sum_{i=1}^{n}\mu_{i}\log_2{\frac{\mu_{i}}{\omega_{i}}}$ to be the relative entropy between $\bmu$ and $\bom$. For a vector $\bom$ and a set of indices $\cI$ let $\bom(\cI) := \sum_{i\in\cI} \omega_i$. 
	For any positive integer $N$ we define $[N] := \left\{1,2,\ldots,N\right\}$ and for any predicate $[\mbox{\sc pred}] :=1$ if $\mbox{\sc pred}$
	is true and equals 0 otherwise.  

	\section{Related Work}\label{sec:background}
	The problem of predicting the labeling of a graph in the batch setting was introduced as a foundational method for semi-supervised (transductive) learning.  In this work, the graph was built using both the unlabeled and labeled instances.  The seminal work by~\citep{Blum:2001} used a metric on the instance space and then built a kNN or $\epsilon$-ball graph.  The partial labeling was then extended to the complete graph by solving a mincut-maxflow problem where opposing binary labels represented sources and sinks.
	In practice this method suffered from very unbalanced cuts.  Significant practical and theoretical advances were made by replacing the mincut/maxflow model with methods based on minimising a quadratic form of the graph Laplacian.  Influential early results include but are not limited to~\citep{\harmoniczhu,\belkinniyogimanifold,ZhouBLWS03}.   A limitation of the graph Laplacian-based techniques is that these batch methods--depending on their implementation--typically require $\Theta(n^2)$ to~$\Theta(n^3)$ time to produce a single set of predictions.  In the online switching setting we will aim for our fastest algorithm to have $\cO(\log n)$ time complexity per trial.
	
	Predicting the labeling of a graph in the online setting was introduced by~\cite{Herbster-OnlineLearningGraphs}.  
	The authors proved bounds for a Perceptron-like algorithm with a kernel based on the graph Laplacian.  Since this work there has been a number of extensions and improvements in bounds including but not limited to~\citep{\largediam,CesaBianchi-FastOptimalTrees,HL09,Herbster-SwitchingGraphs,HPG15,RS17}.
	Common to all of these papers is that a dominant term in their mistake bounds is the (resistance-weighted) cut-size.
	
	From a simplified perspective, the methods for predicting the labeling of a graph (online) split into two approaches. 
	The first approach works directly with the original graph and is usually based on a graph Laplacian~\citep{Herbster-OnlineLearningGraphs,\pseminorm,HPG15}; it provides bounds that utilize the additional connectivity of non-tree graphs, which are particularly strong when the graph contains uniformly-labeled clusters of small (resistance) diameter.  The drawbacks of this approach are that the bounds are weaker on graphs with large diameter, and that computation times are slower. 
	
	The second approach is to approximate the original graph with an appropriately selected tree or ``line'' graph~\citep{\largediam,\wta,\treeopt,\shazoo}.  This enables faster computation times, and bounds that are better on graphs with large diameters. 
	These algorithms may be extended to non-tree graphs by first selecting a spanning tree uniformly at random~\citep{\wta} and then applying the algorithm to the sampled tree. This randomized approach induces {\em expected} mistake bounds that also exploit the cluster structure in the graph (see Section \ref{subsec:random-spanning-trees-and-linearization}). Our algorithm takes this approach.

	\subsection{Switching Prediction}  
	In this paper rather than predicting a single labeling of a graph we instead will predict a (switching) sequence of labelings.   {\em Switching} in the mistake- or regret-bound setting refers to the problem of predicting an online sequence when the ``best comparator'' is changing over time.   In the simplest of switching models the set of comparators is {\em structureless} and we simply pay per switch.  A prominent early result in this model is~\cite{Herbster-TrackingExpert} which introduced the {\em fixed-share} update which will play a prominent role in our main algorithm. Other prominent results in the structureless model include but are not limited to~\citep{derandom,Bousquet-MPP,wacky,koolenswitch,kaw12,fixmir}.
	A stronger model is to instead prove a bound that holds for any arbitrary contiguous sequence of trials.  Such a bound is called an {\em adaptive-regret} bound. This type of bound automatically implies a bound on the structureless switching model.  Adaptive-regret was introduced in~\citep{HS07}\footnote{However, see the analysis of {\sc WML} in~\citep{LW94} for a precursory result.} other prominent results in this model include~\citep{AKCV12,fixmir,DGS15}.
		
	The structureless model may be generalized by introducing a divergence measure on the set of comparators.   Thus, whereas in the structureless model we pay for the number of switches, in the structured model we instead pay in the sum of divergences between successive  comparators.  This model was introduced in~\citep{Herbster-TrackLinPred}; prominent results include~\citep{KSW04,fixmir}.
		
	In~\citep{Herbster-SwitchingGraphs} the authors also consider switching graph label prediction.  However, their results are not directly comparable to ours since they consider the combinatorially more challenging problem of repeated switching within a small set of labelings contained in a larger set. That set-up was a problem originally framed in the ``experts'' setting and posed as an open problem by~\cite{freundopen} and solved in~\citep{Bousquet-MPP}.
	If we apply the bound in~\citep{Herbster-SwitchingGraphs} to the case where there is {\em not} repeated switching within a smaller set, then their bound is uniformly and significantly weaker than the bounds in this paper and the algorithm is quite slow requiring $\theta(n^3)$ time per trial in a typical implementation.
	Also contained in~\citep{Herbster-SwitchingGraphs} is a baseline algorithm based on a kernel perceptron with a graph Laplacian kernel.  The bound of that algorithm has the significant drawback in that it scales with respect to the ``worst'' labeling in a sequence of labelings.  However,  it is simple to implement and we use it as a benchmark in our experiments.

	\subsection{Random Spanning Trees and Linearization}\label{subsec:random-spanning-trees-and-linearization}
	Since we operate in the transductive setting where the entire unlabeled graph is presented to the {\tt learner} beforehand, this affords the {\tt learner} the ability to perform any reconfiguration to the graph as a preprocessing step. The bounds of most existing algorithms for predicting a labeling on a graph are usually expressed in terms of the cut-size of the graph under that labeling.  A natural approach then is to use a spanning tree of the original graph which can only reduce the cut-size of the labeling. 
	
	The effective resistance between vertices $i$ and $j$, denoted $r_{\graph}(i,j)$, is equal to the probability that a spanning tree of $\cG$ 
	drawn uniformly at random (from the set of all spanning trees of $\cG$) includes $(i,j)\in E$ as one of its $n-1$ edges (e.g., \cite{LP17}). As first observed by~\cite{\treeopt}, by selecting a spanning tree uniformly at random from the set of all possible spanning trees, mistake bounds expressed in terms of the cut-size then become {\em expected} mistake bounds now in terms of the effective-resistance-weighted cut-size of the graph. That is, if $\rst$ is a random spanning tree of $\G$ then $\mathbb{E}[\cut_{\rst}(\bu)] = \rcut_{\cG}(\bu)$ and thus $\rcut_{\graph}(\bu) \le \cut_{\graph}(\bu)$. A random spanning tree can be sampled from a graph efficiently using a random walk or similar methods (see e.g.,~\cite{Wilson-RST}). 
	
	To illustrate the power of this randomization consider the simplified example of a graph with two cliques each of size $\nicefrac{n}{2}$, where one clique is labeled uniformly with `+1' and the other `-1' with an additional  arbitrary $\nicefrac{n}{2}$ ``cut'' edges between the cliques.  This dense graph exhibits two disjoint clusters and $\cutG = \nicefrac{n}{2}$.  On the other hand $\rcut_{\cG}(\bu) = \Theta(1)$, since between any two vertices in the opposing cliques there are $\nicefrac{n}{2}$ edge disjoint paths of length $\le 3$ and thus the effective resistance  between any pair of vertices is $\Theta(\nicefrac{1}{n})$.  Since bounds usually scale linearly with (resistance-weighted) cut-size, the cut-size bound would be vacuous but the resistance-weighted cut-size bound would be small.
	
	We will make use of this preprocessing step of sampling a uniform random spanning tree, as well as a \textit{linearization} of this tree to produce a (spine) line-graph, $\spine$. 
	The linearization of $\graph$ to $\spine$ as a preprocessing step was first proposed by~\cite{Herbster-LargeDiameter} and has since been applied in, e.g.,~\citep{\wta,PSST16}. In order to construct $\spine$, a random-spanning tree $\rst$ is picked uniformly at random. A vertex of $\rst$ is then chosen and the graph is fully traversed using a \textit{depth-first search} generating an ordered list $V_{\mathcal{L}} = \left\{i_{l_{1}},\ldots,i_{l_{2m+1}}\right\}$ of vertices in the order they were visited. Vertices in $V$ may appear multiple times in $V_{\mathcal{L}}$. 
	A subsequence $V_{\mathcal{L}^{'}}\subseteq V_{\mathcal{L}}$ is then chosen such that each vertex in $V$ appears only once. The line graph $\spine$ is then formed by connecting each vertex in $V_{\mathcal{L'}}$ to its immediate neighbors in $V_{\mathcal{L'}}$ with an edge. We denote the edge set of $\spine$ by $E_{\spine}$ and let $\cut_{t}:= \cut(\bu_t)$, where the cut $\cut$ is with respect to the linear embedding $\spine$. Surprisingly, as stated in the lemma below, the cut on this linearized graph is no more than twice the cut on the original graph.
	
	\begin{lem}[\cite{Herbster-LargeDiameter}]
		\label{line-graph-cut-size-lemma}
		Given a labeling $\bu\in\{-1,1\}^{n}$ on a graph $\graph$, for the mapping $\graph \rightarrow \rst \rightarrow \spine$, as above, we have
		$\cutSpine(\bu) \leq 2\cut_{\rst}(\bu) \leq 2\cut_{\graph}(\bu)$.
	\end{lem}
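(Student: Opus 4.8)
The plan is to establish the two inequalities separately. The right-hand inequality $\cut_{\rst}(\bu)\le\cut_{\graph}(\bu)$ is immediate: since $\rst$ is a spanning tree of $\graph$, its edge set is a subset of $E$, so every edge cut by $\bu$ in $\rst$ is also an edge cut by $\bu$ in $\graph$; hence the number of cut edges can only be smaller in $\rst$.

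For the left-hand inequality I would work directly with the depth-first traversal. The ordered list $V_{\mathcal{L}}=(i_{l_1},\ldots,i_{l_{2m+1}})$ is an Euler-tour-style walk on the tree $\rst$ (which has $m=n-1$ edges) in which every tree edge is traversed exactly twice, once descending and once backtracking. First I would count the adjacent label-disagreements along this walk, namely $\sum_{k=1}^{2m}\Ive{u_{i_{l_k}}\ne u_{i_{l_{k+1}}}}$: each consecutive pair $(i_{l_k},i_{l_{k+1}})$ is an edge of $\rst$, and because each such edge appears exactly twice in the walk this sum equals $2\cut_{\rst}(\bu)$.

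Next I would pass from $V_{\mathcal{L}}$ to the de-duplicated subsequence $V_{\mathcal{L}'}$, whose consecutive pairs are precisely the edges of $\spine$, so that $\cutSpine(\bu)$ is exactly the number of adjacent disagreements along $V_{\mathcal{L}'}$. The key sublemma is that deleting a single entry from any $\{-1,1\}$-valued sequence cannot increase its number of adjacent disagreements. For an interior entry this follows from the triangle inequality for the discrete metric, $\Ive{a\ne c}\le\Ive{a\ne b}+\Ive{b\ne c}$ (valid for binary $a,b,c$), which shows that collapsing $a,b,c$ to $a,c$ removes at least as many disagreements as it creates; deleting a boundary entry can only lower the count. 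Deleting the duplicate occurrences of each vertex one at a time thus transforms $V_{\mathcal{L}}$ into $V_{\mathcal{L}'}$ without ever increasing the disagreement count, which yields $\cutSpine(\bu)\le 2\cut_{\rst}(\bu)$ and completes the chain.

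The hard part, and indeed the only non-routine step, is the monotonicity of the adjacent-disagreement count under passage to a subsequence; everything else is bookkeeping (the ``each edge twice'' property of the DFS walk and the edge-subset property of spanning trees). I would take care to note that $V_{\mathcal{L}'}$ may be any subsequence containing each vertex once, not merely the first occurrences, and that the deletion argument is insensitive to this choice because it treats a single deleted entry at a time.
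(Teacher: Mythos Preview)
Your argument is correct. Note, however, that the paper does not actually supply a proof of this lemma: it is quoted verbatim from \citet{Herbster-LargeDiameter} and used as a black box, so there is no ``paper's own proof'' to compare against. Your proof is the standard one for this result and is essentially what appears in the cited reference: the Euler-tour/DFS walk on $\rst$ traverses each tree edge exactly twice, giving exactly $2\cut_{\rst}(\bu)$ adjacent label disagreements along $V_{\mathcal{L}}$, and the subsequence-monotonicity step (via $\Ive{a\ne c}\le\Ive{a\ne b}+\Ive{b\ne c}$) shows that passing to $V_{\mathcal{L}'}$ cannot increase this count. The second inequality is, as you say, just the edge-subset property of spanning trees.
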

	
	By combining the above observations we may reduce the problem of learning on a graph to that of learning on a line graph.  In particular, if we have an algorithm with a mistake bound of the form $M\le \mathcal{O}(\cut_{\cG}(\bu))$ this implies we then may give an {\em expected} mistake bound of the form $M\le \mathcal{O}(\cut^r_{\cG}(\bu))$ by first sampling a random spanning tree and then linearizing it as above.   
	Thus, for simplicity in presentation, we will only state the deterministic mistake bounds in terms of cut-size, although the expected bounds in terms of resistance-weighted cut-sizes will hold simultaneously.

	\section{Switching Specialists}\label{sec:switching-specialists}
	In this section we present a new method based on the idea of {\em specialists}~\citep{\spec} from the {\em prediction with expert advice} literature~\citep{LW94,AS90,\nicolobook}.   Although the achieved bounds are slightly worse than other methods for predicting a {\em single} labeling of a graph, the derived advantage is that it is possible to 
	obtain ``competitive'' bounds with fast algorithms to predict a sequence of changing graph labelings. 
	
	Our inductive bias is to predict well when a labeling has a {\em small} (resistance-weighted) cut-size.  The complementary perspective implies that the labeling consists of a {\em few} uniformly labeled  clusters.  This suggests the idea of maintaining a collection of basis functions where each such function is specialized to predict a constant function on a given cluster of vertices.   To accomplish this technically we adapt the method of {\em specialists}~\citep{\spec,kaw12}.   A specialist is a prediction function $\se$ from an input space to an extended output space with {\em abstentions}.  So for us the input space is just $V=[n]$, the vertices of a graph; and the extended output space is $\{-1,1,\zzz\}$ where $\{-1,1\}$ corresponds to predicted  labels of the vertices, but `$\zzz$' indicates that the specialist abstains from predicting.   Thus a specialist {\em specializes} its prediction to part of the input space and in our application the specialists correspond to a collection of clusters which cover the graph, each cluster uniformly predicting $-1$ or~$1$.
	\setlength{\textfloatsep}{3.0pt} 
	\begin{algorithm2e}[t]
		\begin{small}
			\SetAlgoVlined
			\DontPrintSemicolon
			\SetKwInOut{Input}{input}
			\SetKwInOut{Init}{initialize}
			\SetKwInOut{Parameter}{parameter }
			\SetKw{Predict}{predict}
			\SetKw{Receive}{receive}
			\SetKw{Set}{set}
			\Input{ Specialists set $\spset$}
			\Parameter{ $\alpha\in [0,1]$}
			\Init{ $\bom_1 \gets \frac{1}{|\spset|}\bm{1}$, $\bomdot[0] \gets \frac{1}{|\spset|}\bm{1}$, $\bp \gets \bm{0}$, $m \gets 0$}
			\For{$t=1$ \KwTo $T$}{
				\Receive{$i_{t}\in V$\;}
				
				\Set{ $\At := \{\Specna \in \spset : \Specna(i_t) \ne \zzz \}$\;}
				
				\ForEach(\tcp*[f]{delayed share update}){$\Specna\in\At$}{
					\vspace{-0.15in}
					\begin{flalign}\label{ShareUpdateLogN}
					\omega_{t,\se} \gets \left(1-\alpha\right)^{m-p_{\Specna}}\omdot[t-1] + \frac{1-\left(1-\alpha\right)^{m-p_{\se}}}{|\spset|}&&
					\end{flalign}
					\vspace{-0.1in}
				}
				
				\Predict{$\predyt \gets \sign(\sum_{\Specna\in\At} \omega_{t,\Specna}\, \Specna(i_t))$\;}
				
				\Receive{$y_t\in\{-1,1\}$\;}
				
				\Set{$\Ct := \{\Specna \in \spset : \Specna(i_t) = y_t \}$\;}
				
				\eIf(\tcp*[f]{loss update} ){$\predyt\ne y_t$}{
					\vspace{-0.1in}
					\begin{flalign}\label{eq:lossupdate}
					\omdot &\gets
					\begin{cases}
					0 &  \Specna \in \At\cap\bar{\Ct} \\
					\omdot[t-1]& \Specna \not\in \At \\
					\omega_{t,\Specna}\frac{\bom_t(\At)}{\bom_t(\Ct)} & \Specna\in\Ct
					\end{cases}&&
					\end{flalign}

					\ForEach{$\Specna\in\At$}{	
						$p_{\Specna} \gets m$\;
					}
					$m \gets m + 1$\;
				}{
					$\bomdot \gets \bomdot[t-1]$\;
				}	
			}
			\caption{{\sc Switching Cluster Specialists}}\label{Main_Alg}
		\end{small}
	\end{algorithm2e}
	
	In Algorithm~\ref{Main_Alg} we give our switching specialists method.   The algorithm maintains a weight vector $\bom_t$ over the specialists in which the magnitudes may be interpreted as the current confidence we have in each of the specialists.  The updates and their analyses are a combination of three standard methods:  i) {\em Halving} loss updates, ii) specialists updates and iii) (delayed) fixed-share 
	updates.
	The loss update~\eqref{eq:lossupdate} zeros the weight components of incorrectly predicting specialists, while the non-predicting specialists are not updated at all.   In~\eqref{ShareUpdateLogN} we give our {\em delayed} fixed-share style update.  
	A standard fixed share update may be written in the following form:  
	\begin{equation}\label{OldFixedShareUpdate}
	\omega_{t,\Specna} = (1-\alpha)\omdot[t-1] + \frac{\alpha}{|\spset |}\,.
	\end{equation}
	Although~\eqref{OldFixedShareUpdate} superficially appears different to~\eqref{ShareUpdateLogN}, in fact these two updates are exactly the same in terms of predictions generated by the algorithm.  This is because~\eqref{ShareUpdateLogN} caches updates until the given specialist is again active.  The purpose of this computationally is that if the active specialists are, for example, logarithmic in size compared to the total specialist pool, we may then achieve an exponential speedup over~\eqref{OldFixedShareUpdate}; which in fact we will exploit.
	
	In the following theorem we will give our switching specialists bound.  The  dominant cost of switching on trial $t$ to $t+1$ is given by the non-symmetric $\J(\bmu_{t},\bmu_{t+1}) := |\{\Specna\in\spset: \mu_{t,\Specna}= 0, \mu_{t+1,\Specna}\neq 0\}|$, i.e., we pay only for each new specialist introduced but we do not pay for removing specialists.
	\begin{theorem}\label{MainTheorem}
		For a given specialist set $\spset$, let $M_{\spset}$ denote the number of mistakes made in predicting the online sequence $(i_1,y_1),\ldots,(i_{T},y_{T})$ by Algorithm~\ref{Main_Alg}. Then, 
		\begin{equation}\label{TheoremBasicBound}
		M_{\spset}\leq \frac{1}{\pi_1}\log{|\spset|} + \sum_{t=1}^{T}\frac{1}{\pi_t}\log{\frac{1}{1-\alpha}}	+ \sum_{i=1}^{|\sset|-1}\J\!\left(\bmu_{k_i},\bmu_{k_{i+1}}\right)\log{\frac{|\spset|}{\alpha}}\,,
		\end{equation}
		for any sequence of {\bf consistent} and {\bf well-formed} comparators $\bmu_1,\ldots,\bmu_T \in \Delta_{|\spset|}$ 
		where $\sset := \{k_1=1\!<\cdots<k_{|\sset|}\}\!:=\! \{t\!\in\! [T] \!:\! \mut \neq \mut[t-1]\}\cup\{1\}$, and $\pi_t :=\bmu_t(\Ct)$.
	\end{theorem}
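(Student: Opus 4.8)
The plan is to run the standard relative-entropy potential argument for fixed-share, specialised to the Halving loss update and to cluster specialists. I work with the potential $\Phi_t := \RE[\bmu_t]{\bomdot[t-1]}$, the relative entropy from the current comparator to the cached weight vector, and I exploit the equivalence noted after~\eqref{OldFixedShareUpdate}: the delayed update~\eqref{ShareUpdateLogN} reproduces exactly the predictions of the mistake-driven fixed-share update~\eqref{OldFixedShareUpdate}, so it suffices to analyse an algorithm that, on each mistake trial, performs one step of~\eqref{OldFixedShareUpdate} followed by the loss update~\eqref{eq:lossupdate}, leaving its state unchanged on correct trials. I would then track how $\Phi_t$ evolves through the three mechanisms — the loss update, the share update, and the comparator switches — and sum the resulting per-trial inequalities.

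The first key step is the loss-update progress bound. On a mistake a weighted majority of the active specialists predict incorrectly, i.e.\ $\bom_t(\At\cap\bar{\Ct})\ge\bom_t(\At\cap\Ct)$, so the renormalisation factor $\bom_t(\At)/\bom_t(\Ct)$ in~\eqref{eq:lossupdate} is at least $2$. Writing the potential decrease as $\RE[\bmu_t]{\bom_t}-\RE[\bmu_t]{\bomdot}=\sum_{\se}\mu_{t,\se}\log(\dot{\omega}_{t,\se}/\omega_{t,\se})$, I use that a \textbf{consistent} comparator places no mass on incorrectly-predicting active specialists (so the $\log(\dot{\omega}_{t,\se}/\omega_{t,\se})=-\infty$ contributions from the zeroed coordinates are multiplied by $0$) and that for cluster specialists correctness forces activity, giving $\Ct\subseteq\At$ and hence $\bmu_t(\Ct)=\pi_t$. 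This yields the progress inequality $\RE[\bmu_t]{\bom_t}-\RE[\bmu_t]{\bomdot}\ge\pi_t\log 2=\pi_t$ on every mistake. Dividing by $\pi_t$ turns this into ``each mistake is paid for by a $\pi_t^{-1}$-weighted potential drop of at least one'', which is exactly what lets me count $M_{\spset}$.

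Next I bound the two ways the potential is replenished, using the componentwise lower bounds $\omega_{t,\se}\ge(1-\alpha)\dot{\omega}_{t-1,\se}$ and $\omega_{t,\se}\ge\alpha/|\spset|$ guaranteed by~\eqref{OldFixedShareUpdate}. The first shows that a single share step raises the relative entropy to a fixed comparator by at most $\log\frac{1}{1-\alpha}$, producing the middle term after the per-mistake inequality is divided by $\pi_t$ and summed (summing over all $t$ rather than over mistake trials only gives the stated, slightly loose, $\sum_{t=1}^{T}$). For a switch from $\bmu_{k_i}$ to $\bmu_{k_{i+1}}$ with weights held fixed, the relative entropy can only grow through coordinates newly entering the support; the floor $\dot{\omega}\ge\alpha/|\spset|$ bounds each such contribution by $\log\frac{|\spset|}{\alpha}$, and the \textbf{well-formed} assumption that $\bmu_{k_{i+1}}$ is uniform on its support makes each of the $\J(\bmu_{k_i},\bmu_{k_{i+1}})$ new specialists contribute at most $\pi_{k_{i+1}}\log\frac{|\spset|}{\alpha}$, so the switch cost is at most $\pi_{k_{i+1}}\,\J(\bmu_{k_i},\bmu_{k_{i+1}})\log\frac{|\spset|}{\alpha}$. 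The crucial point is that this cost is \emph{proportional to} $\pi_{k_{i+1}}$: when the boundary inequality is divided by $\pi_{k_{i+1}}$ the factor cancels and the switch contributes the bare term $\J(\bmu_{k_i},\bmu_{k_{i+1}})\log\frac{|\spset|}{\alpha}$, explaining why the last sum in~\eqref{TheoremBasicBound} carries no $\pi^{-1}$.

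Finally I telescope. Because $\pi_t$ is constant within each segment (the comparator is fixed there), the $\pi_t^{-1}$-weighted potential differences telescope cleanly inside a segment, leaving only the share contributions; the initial potential is $\RE[\bmu_1]{\bomdot[0]}=\log|\spset|-\entropy[1]\le\log|\spset|$, which after division by $\pi_1$ gives the first term; and the final potential is discarded using $d(\cdot,\cdot)\ge 0$. The main obstacle I anticipate is precisely the bookkeeping at segment boundaries: the telescoping weight changes from $\pi_{k_i}^{-1}$ to $\pi_{k_{i+1}}^{-1}$, and one must verify that the proportionality of the switch cost to $\pi_{k_{i+1}}$ (from well-formedness) lines up with this weight change so that the residual boundary terms do not spoil the bound and the cancellation yielding the un-normalised $\J\log\frac{|\spset|}{\alpha}$ term is rigorous. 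Getting the base-$2$ accounting (the $\log 2=1$) and the constants in~\eqref{eq:lossupdate} exactly right is the other place I expect to need care.
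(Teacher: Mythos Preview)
Your plan follows the same relative-entropy potential route as the paper, and the loss-update step and the share-update step are handled exactly as the paper does (inequalities~\eqref{freunds-realizable-reduction} and~\eqref{eq:inequality-1}). The place your proposal is not yet solid is precisely the one you flag: the segment boundary.

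Your claim that ``the relative entropy can only grow through coordinates newly entering the support'' is not correct for well-formed comparators when $\pi_{k_i}\neq\pi_{k_{i+1}}$: coordinates $\se$ that \emph{stay} in the support change mass from $\pi_{k_i}$ to $\pi_{k_{i+1}}$, so their contribution $\mu_{\se}\log(\mu_{\se}/\omega_{\se})$ changes as well. Consequently the switch cost $d(\bmu_{k_{i+1}},\bom)-d(\bmu_{k_i},\bom)$ is not simply $\pi_{k_{i+1}}\J(\cdot,\cdot)\log(|\spset|/\alpha)$, and the ``divide by $\pi_{k_{i+1}}$ and the factor cancels'' argument does not go through as stated, because the telescoping weight on the outgoing term is $1/\pi_{k_i}$, not $1/\pi_{k_{i+1}}$.

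The fix the paper uses is to carry an entropy correction alongside the relative entropy. For a well-formed $\bmu_t$ one has the identity
\[
\frac{1}{\pi_t}\,d(\bmu_t,\bom)\;+\;\frac{1}{\pi_t}\,H(\bmu_t)\;=\;\sum_{\se\in\operatorname{supp}(\bmu_t)}\log\frac{1}{\omega_\se}\,,
\]
and this right-hand side \emph{does} change only through the support: continuing coordinates contribute identically before and after. Hence
\[
\frac{1}{\pi_t}d(\bmu_t,\bom_{t+1})-\frac{1}{\pi_{t+1}}d(\bmu_{t+1},\bom_{t+1})
\;\ge\;-\frac{1}{\pi_t}H(\bmu_t)+\frac{1}{\pi_{t+1}}H(\bmu_{t+1})-\J(\bmu_t,\bmu_{t+1})\log\frac{|\spset|}{\alpha}\,,
\]
using $\omega_{t+1,\se}\ge\alpha/|\spset|$ on the newly-entered coordinates and $\omega_{t+1,\se}\le 1$ on the departing ones. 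The entropy terms then telescope across segments, and at the start they combine with the initial relative entropy as $\frac{1}{\pi_1}d(\bmu_1,\bom_1)+\frac{1}{\pi_1}H(\bmu_1)=\frac{1}{\pi_1}\log|\spset|$, which is where the first term of the bound comes from (you had proposed dropping $H(\bmu_1)$ instead; that would give a weaker constant). Once you add this entropy bookkeeping, your plan matches the paper's proof exactly.
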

	The bound in the above theorem depends crucially on the best sequence of {\em consistent} and {\em well-formed} comparators $\bmu_1,\ldots,\bmu_T$.  The consistency requirement implies that on every trial there is no active incorrect specialist assigned ``mass'' ($\bmu_t(\At\setminus\Ct) = 0$). We may eliminate the consistency requirement by  ``softening''  the loss update~\eqref{eq:lossupdate}. 
	A comparator $\bmu\in\Delta_{|\spset|}$ is \textit{well-formed} if $\forall\ v\in V$, there exists a {\em unique}  $\Specna\in\spset$ such that $\Specna(v)\neq\square$ and $\mu_{\Specna}>0$, and furthermore there exists a $\pi\in (0,1]$ such that $\forall \Specna\in\spset : \mu_{\Specna}\in\{0,\pi\}$, i.e., each specialist in the support  of $\bmu$ has the same mass $\pi$ and these specialists disjointly cover the input space ($V$).  At considerable complication to the form of the bound the well-formedness requirement may be eliminated.

	The above bound is ``smooth'' in that it scales with a gradual change in the comparator.  In the next section we describe the novel specialists sets that we've tailored to graph-label prediction so that a small change in comparator corresponds to a small change in a graph labeling. 
	\subsection{Cluster Specialists}\label{sec:cluster-specialists}
	In order to construct the {\em cluster specialists} over a graph $\cG=(V=[n],E)$, we first construct a line graph as described in Section~\ref{subsec:random-spanning-trees-and-linearization}.  A cluster specialist is then defined by $\Spec{l}{r}{y}(\cdot)$ which maps $V \into \{-1,1,\square\}$ where $\Spec{l}{r}{y}(v) := y$ if $l\le v \le r$ and $\Spec{l}{r}{y}(v) := \square$ otherwise.
	Hence cluster specialist $\Spec{l}{r}{y}(v)$ corresponds to a function that predicts the label $y$ if vertex $v$ lies between vertices $l$ and $r$ and abstains otherwise. 
	Recall that by sampling a random spanning tree the expected cut-size of a labeling on the spine is no more than twice the resistance-weighted cut-size on $\cG$.
	Thus, given a labeled graph with a small resistance-weighted cut-size with densely interconnected clusters and modest intra-cluster connections, this implies a cut-bracketed linear segment on the spine will in expectation roughly correspond to one of the original dense clusters.   We will consider two basis sets of cluster specialists.  

	\paragraph{Basis $\FB_{n}$:}
	We first introduce the {\em complete} basis set $\FB_n := \{\Spec{l}{r}{y} : l,r \in [n] , l\le r ;   y \in\{-1,1\} \}$.
	We say that a set of specialists $\cC_{\bu} \subseteq \spset\subseteq 2^{{\{-1,1,\square\}}^n}$ from basis $\spset$ {\em covers} a labeling $\bu\in\{-1,1\}^n$ if for all $v\in V=[n]$ and $\se\in\cC_{\bu}$ that $\se(v)\in \{u_v,\zzz\}$ and if $v\in V$ then there exists $\se\in\cC_{\bu}$ such that $\se(v) = u_v$.  The basis $\spset$  is {\em complete} if every labeling $\bu\in\{-1,1\}^n$ is covered by some $\cC_{\bu}\subseteq\spset$.  The basis $\FB_n$ is complete and in fact has the following approximation property: for any $\bu\in\{-1,1\}^n$  there exists a covering set $\cC_{\bu} \subseteq \FB_{n}$ such that $|\cC_{\bu}| = \cutS+1$.  This follows directly as a line with $k-1$ cuts is divided into $k$ segments.  We now illustrate the use of basis $\FB_n$ to predict the labeling of a graph.  For simplicity we illustrate by considering the problem of predicting a single graph labeling without switching.    As there is no switch we will set $\Sp :=0$ and thus if the graph is labeled with $\bu\in\{-1,1\}^n$ with cut-size $ \cutSpine(\bu)$ then we will need $\cutSpine(\bu)+1$ specialists to predict the labeling and thus the comparators may be post-hoc optimally determined so that $\bmu = \bmu_1 = \cdots =\bmu_T$ and there will be $ \cutSpine(\bu)+1$ components of $\bmu$ each with ``weight'' $\nicefrac{1}{(\cutSpine(\bu)+1)}$, thus $\nicefrac{1}{\pi_1} =\cutSpine(\bu)+1$,  since there will be only one specialist (with non-zero weight) active per trial. Since the cardinality of $\FB_n$ is $n^2 +n$, by substituting into~\eqref{TheoremBasicBound} we have that the number of mistakes will be bounded by $(\cutSpine(\bu)+1) \log{(n^2+n)}$.  Note for a single graph labeling on a spine this bound is not much worse than the best known result~{\citep[Theorem 4]{\largediam}}. 
	In terms of computation time however it is significantly slower than the algorithm in~\citep{\largediam} requiring $\Theta(n^2)$ time to predict on a typical trial since on average there are $\Theta(n^2)$ specialists active per trial.  
	
	\paragraph{Basis $\BTB_{1,n}$:}
	We now introduce the basis $\BTB_n$ which has $\Theta(n)$ specialists and only requires $\cO(\log n)$ time per trial to predict with only a small increase in bound.
	The basis is defined as
	\begin{equation*}\label{eq:hbase}
	\BTB_{p,q} :=\begin{cases}
	\{\Spec{p}{q}{-1},\Spec{p}{q}{1}\} & p=q, \\
	\{\Spec{p}{q}{-1},\Spec{p}{q}{1}\} \!\cup\! \BTB_{p,\lfloor\frac{p+q}{2}\rfloor} 
	\cup\BTB_{\lfloor\frac{p+q}{2}\rfloor + 1,q} & p\ne q
	\end{cases}
	\end{equation*}
	and is analogous to a binary tree.  We have the following approximation property for $\BTB_n := \BTB_{1,n}$, 
	\begin{proposition}\label{pro:hbasis}   
		The basis $\BTB_{n}$ is complete. Furthermore, for any labeling $\bu\in\{-1,1\}^n$  there exists a covering set $\cC_{\bu} \subseteq \BTB_{n}$ such that $|\cC_{\bu}| \le 2 (\cutS+1) \lceil\log_2 \frac{n}{2}\rceil$ for $n>2$. 
	\end{proposition}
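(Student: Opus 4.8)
The plan is to prove the two assertions separately, first completeness and then the covering bound, both by exploiting the recursive ``binary tree'' structure of $\BTB_n$. Throughout, I identify each basis element $\Spec{l}{r}{y}$ with the interval $[l,r]\subseteq[n]$ it acts on, and I call an interval $[l,r]$ a \emph{node} if the pair $\Spec{l}{r}{-1},\Spec{l}{r}{1}$ is generated by the recursion defining $\BTB_{1,n}$. For completeness, I would first show by induction on $q-p$ that $\BTB_{p,q}$ contains both singletons $\Spec{v}{v}{-1}$ and $\Spec{v}{v}{1}$ for every $v\in[p,q]$: the base case $p=q$ is immediate, and in the inductive step the two recursive calls cover $[p,\lfloor\frac{p+q}{2}\rfloor]$ and $[\lceil\frac{p+q}{2}\rceil,q]$, whose union is all of $[p,q]$. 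Taking $p=1,q=n$ shows $\BTB_n$ contains all $2n$ singletons, so for any $\bu$ the set $\cC_{\bu}:=\{\Spec{v}{v}{u_v}:v\in V\}$ predicts $u_v$ at each $v$ and $\zzz$ elsewhere, satisfying both requirements of the covering definition; hence $\BTB_n$ is complete.

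For the cardinality bound, the key reduction is that the labeling $\bu$, read along the spine, consists of exactly $\cutS+1$ maximal monochromatic runs of consecutive equally-labeled vertices, since each of the $\cutS$ cut edges of $\spine$ is a boundary between two runs. It therefore suffices to cover each run $[a,b]$ (of constant label $y$) by at most $2\lceil\log_2\frac{n}{2}\rceil$ nodes, each taken with label $y$: every such node lies inside the run and so predicts $y$ correctly, and the union of these node-sets over all runs is a covering set $\cC_{\bu}$ whose size is at most $2(\cutS+1)\lceil\log_2\frac{n}{2}\rceil$, as required.

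The crux is then a segment-tree covering lemma: any interval $[a,b]\subseteq[n]$ is a union of at most $2(\lceil\log_2 n\rceil-1)=2\lceil\log_2\frac{n}{2}\rceil$ nodes, all contained in $[a,b]$. I would prove this by the standard root-to-leaf argument. First I bound the depth: since both children of a length-$L$ node have length $\lceil L/2\rceil$, a node at recursion level $j$ has length exactly $\lceil n/2^{j}\rceil$, so length-one leaves are reached at level $d:=\lceil\log_2 n\rceil$. Next let $N^\ast$ be the deepest node containing $[a,b]$, i.e.\ the point at which the descents toward $a$ and toward $b$ diverge; then $[a,b]$ splits into a suffix of the left child of $N^\ast$ and a prefix of its right child. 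A short induction on subtree height shows that a proper suffix (resp.\ prefix) of a height-$h$ subtree is covered by at most $h$ nodes, while a whole subtree costs a single node. Since the two children of $N^\ast$ have height at most $d-1$, the suffix and prefix together use at most $2(d-1)$ nodes, and the degenerate case in which a child is used in full is absorbed because $d\ge 2$ whenever $n>2$.

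The hard part will be pinning down the exact constant $2\lceil\log_2\frac{n}{2}\rceil$ rather than a loose $\mathcal{O}(\log n)$, which is precisely what the careful suffix/prefix induction above delivers. Two wrinkles need attention. When $p+q$ is even the midpoint is shared, so sibling nodes overlap in one vertex; this is harmless for covering a monochromatic run (the shared vertex is simply predicted correctly by both nodes) and does not affect the depth, since lengths still contract as $\lceil L/2\rceil$. Second, one must check the boundary cases where a run endpoint coincides with a node boundary, which only ever reduces the count. Once the lemma is established, summing $2\lceil\log_2\frac{n}{2}\rceil$ over the $\cutS+1$ runs yields the stated bound.
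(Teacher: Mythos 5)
Your proof is correct, and while it shares the paper's overall skeleton---completeness via the singleton specialists $\Spec{v}{v}{\pm 1}$, decomposition of $\bu$ into $\cutS+1$ maximal monochromatic runs, and a per-run covering bound of $2\lceil\log_2\frac{n}{2}\rceil$---the argument for the key per-run lemma is genuinely different. The paper proceeds extremally: it defines the minimum-complexity $(l,r)$-covering set and shows by an exchange/contradiction argument that such a set can contain at most two specialists of any given depth, then sums over depths $2,\ldots,d$ to get the bound (assuming WLOG $n=2^r$). You instead give the constructive canonical segment-tree decomposition: locate the deepest node containing the run, split the run into a suffix of its left child and a prefix of its right child, and show by induction on subtree height that a proper suffix or prefix of a height-$h$ subtree costs at most $h$ nodes, yielding $2(\lceil\log_2 n\rceil-1)=2\lceil\log_2\frac{n}{2}\rceil$ nodes total. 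Your route buys two things: it explicitly exhibits the covering set rather than only bounding the size of an optimal one, and it handles general $n$ directly via the contraction $L\mapsto\lceil L/2\rceil$ (including the one-vertex overlap of siblings when $p+q$ is even), which is exactly where the ceiling in the proposition's statement comes from; the paper's proof suppresses this by assuming $n$ is a power of two. The paper's extremal lemma, on the other hand, gives slightly more structural information (a bound on \emph{every} minimal cover, per depth level), which is mildly stronger than what the proposition requires. Both arguments deliver the same constant, and your care with the degenerate case where one child is used in full (absorbed since $d\ge 2$ for $n>2$) is exactly the detail needed to keep the factor at $2$ rather than $2+o(1)$.
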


	From a computational perspective the binary tree structure ensures that there are only $\Theta(\log n)$ specialists active per trial,  leading to an exponential speed-up in prediction.  
	A similar set of specialists were used for obtaining adaptive-regret bounds in~\citep{DGS15,KOWW17}.  In that work however the ``binary tree'' structure is over the time dimension (trial sequence) whereas in this work the binary tree is over the space dimension (graph) and a fixed-share update is used to obtain adaptivity over the time dimension.\footnote{An interesting open problem is to try to find good bounds and time-complexity with sets of specialists over {\em both} the time and space dimensions.}

	In the corollary that follows we will exploit the fact that by making the algorithm \textit{conservative} we may reduce the usual $\log{T}$ term in the mistake bound induced by a fixed-share update to $\log{\log{T}}$. A conservative algorithm only updates the specialists' weights on trials on which a mistake is made. 
	Furthermore the bound given in the following corollary is \textit{smooth} 
	as the cost per switch will be measured with a Hamming-like divergence $H$ on the ``cut'' edges between successive labelings,  
	defined as
	\[ H(\bu,\bu') := \!\sum\limits_{(i,j)\in E_{\spine}} \![\ [[u_i \neq u_j]\lor [u_i' \neq u_j' ]]\  \land \ 
	[[u_i \neq u_i'] \lor [u_j \neq u_j' ]]\ ]\,.
	\]
	Observe that $H(\bu,\bu')$ is  smaller than twice the hamming distance between $\bu$ and $\bu'$ and is often significantly smaller.	
	To achieve the bounds we will need the following proposition, which upper bounds divergence $J$ by $H$,
	a subtlety is that there are many distinct sets of specialists consistent with a given comparator.  For example, consider a uniform labeling on $\spine$. One may ``cover'' this labeling with a single specialist or alternatively $n$ specialists, one covering each vertex.   For the sake of simplicity in bounds we will always choose the smallest set of covering specialists.  Thus we introduce the following notions of {\em consistency} and {\em minimal-consistency}.  
	
	\begin{definition}
		A comparator $\bmu \in \Delta_{|\spset|}$ is consistent with the labeling $\bu\in\{-1,1\}^n$ if $\bmu$ is well-formed and $\mu_\Specna >0$ implies that for all $v\in V$ that $\Specna(v)\in\{u_v,\square\}$.
	\end{definition}
	\begin{definition}
		A comparator $\bmu \in \Delta_{|\spset|}$ is minimal-consistent with the labeling $\bu\in\{-1,1\}^n$ if it is consistent with $\bu$ and the cardinality of its support set $|\{\mu_\Specna : \mu_\Specna >0\}|$ is the minimum of all comparators consistent with $\bu$.
	\end{definition}

	\begin{proposition}\label{prop:hamming-bound}
		For a linearized graph $\spine$, for comparators $\bmu, \bmu'\in\Delta_{|\FB_n|}$ that are minimal-consistent with
		$\bu$ and $\bu'$ respectively,
		\begin{equation*}
		\J[\FB_n]\!\!\left(\bmu,\bmu'\right) \leq \min{\left(2H\!\!\left(\bu,\bu'\right)\!, {\cut_\spine}\!\left(\bu'\right)+1\right)}\,.
		\end{equation*}
	\end{proposition}

	A proof is given in Appendix~\ref{sec:hamming-bound-prop-proof}.
	In the following corollary we summarize the results of the \SCS\ algorithm using the basis sets $\FB_n$ and $\BTB_n$ with an optimally-tuned switching parameter $\alpha$. 
	\begin{corollary}\label{co:ss} 
		For a connected $n$-vertex graph $\cG$ and with randomly sampled spine $\spine$, the number of mistakes made in predicting the online sequence $(i_1,y_1),\ldots,(i_{T},y_{T})$ by the \SCS\ algorithm with optimally-tuned $\alpha$ is upper bounded with basis $\FB_n$ by
		\begin{align*}
		&	\mathcal{O}\left( \cut_1\log{n} + \sum_{i=1}^{|\sset| - 1}H(\bu_{k_i},\bu_{k_{i+1}})\left(\log{n}+ \log{|\sset|} + \log{\log{T}}\right)\right)\,
		\end{align*}
		and with basis $\BTB_n$ by 
		\begin{align*}
		&	\mathcal{O}\left(\left(\cut_1\log{n}  + \sum_{i=i}^{|\sset| - 1}H(\bu_{k_i},\bu_{k_{i+1}})\left(\log{n}+ \log{|\sset|} + \log{\log{T}}\right)\right)\log{n}\right)\,
		\end{align*}
		for any sequence of labelings $\bu_1,\ldots,\bu_T \in \{-1,1\}^n$ such that  $u_{t,i_t} = y_t$ for all $t\in [T]$.
	\end{corollary}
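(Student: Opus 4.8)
The plan is to obtain both bounds by instantiating Theorem~\ref{MainTheorem} with a single, carefully chosen comparator sequence and then optimising $\alpha$ post hoc. For a basis $\spset\in\{\FB_n,\BTB_n\}$ I would take each $\bmu_t$ to be a \emph{minimal-consistent} comparator for the labeling $\bu_t$. This choice makes the comparator switch-set $\sset$ coincide with the labeling switch-set, so that $\pi_t=\bmu_t(\Ct)$ equals the uniform support weight $1/|\operatorname{supp}(\bmu_t)|$ of the current segment, while $J_i:=\J(\bmu_{k_i},\bmu_{k_{i+1}})$ counts exactly the specialists freshly introduced at the $i$-th switch. I would then feed in the structural facts established earlier. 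For $\FB_n$: $|\FB_n|=n^2+n$ so $\log|\FB_n|=\cO(\log n)$; the approximation property gives $|\operatorname{supp}(\bmu_t)|=\cut_{k(t)}+1$, hence $1/\pi_1=\cut_1+1$; and Proposition~\ref{prop:hamming-bound} gives $J_i\le 2H(\bu_{k_i},\bu_{k_{i+1}})$. For $\BTB_n$: $|\BTB_n|=\Theta(n)$ so $\log|\BTB_n|=\cO(\log n)$; Proposition~\ref{pro:hbasis} gives $|\operatorname{supp}(\bmu_t)|\le 2(\cut_{k(t)}+1)\lceil\log_2\tfrac n2\rceil$, hence $1/\pi_1=\cO(\cut_1\log n)$; and an analogue of Proposition~\ref{prop:hamming-bound} (each changed contiguous region perturbs only $\cO(\log n)$ binary-tree specialists) gives $J_i=\cO(H(\bu_{k_i},\bu_{k_{i+1}})\log n)$.

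With $B:=\sum_i J_i$ and $S:=\sum_t 1/\pi_t$, the bound \eqref{TheoremBasicBound} becomes $M\le \tfrac{1}{\pi_1}\log|\spset|+S\log\tfrac{1}{1-\alpha}+B\log\tfrac{|\spset|}{\alpha}$. Here conservativeness is essential: since the weights and the delayed share update \eqref{ShareUpdateLogN} are paced by the mistake counter $m$, which advances only on mistakes, the effective horizon inside $S$ is $M$ rather than $T$, giving $S\le M\max_t(1/\pi_t)=\cO(M\,\cut_{\max})$ for $\FB_n$ (and $\cO(M\,\cut_{\max}\log n)$ for $\BTB_n$). The crucial tuning is $\alpha^\star\approx B/S$, the value that minimises $S\alpha+B\log\tfrac1\alpha$: it collapses the ``staying'' cost to $S\log\tfrac{1}{1-\alpha^\star}=\cO(\alpha^\star S)=\cO(B)$, so the large factor $\cut_{\max}$ is \emph{not} left multiplying $M$ but survives only inside a logarithm, $\log\tfrac{1}{\alpha^\star}=\log\tfrac{S}{B}=\cO(\log M+\log n)$.

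Collecting terms then yields the self-referential inequality $M\le A+cB(\log n+\log M)$, where $A=\tfrac{1}{\pi_1}\log|\spset|$ carries the leading term ($\cO(\cut_1\log n)$ for $\FB_n$, $\cO(\cut_1\log^2 n)$ for $\BTB_n$). To resolve it I would insert the trivial bound $M\le T$ once inside the logarithm and re-substitute, so that $\log M=\cO(\log A+\log B+\log\log T)=\cO(\log n+\log|\sset|+\log\log T)$, using that $A$ and $B$ are $\mathrm{poly}(n)\cdot|\sset|$; this is the standard device by which a conservative update turns $\log T$ into $\log\log T$. Back-substituting gives $M=\cO\!\big(A+B(\log n+\log|\sset|+\log\log T)\big)$, and inserting the two bases' values of $A$ and $B$ produces exactly the $\FB_n$ and $\BTB_n$ statements. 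Finally, composing with Lemma~\ref{line-graph-cut-size-lemma} and taking expectation over the uniform random spanning tree converts the spine cut-sizes into (resistance-weighted) quantities, if that form is wanted.

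I expect the main obstacle to be this tuning-and-resolution step rather than the substitutions. One must select $\alpha^\star\approx B/S$ precisely so that the share cost $S\log\tfrac1{1-\alpha}$ is swallowed by $M$ without leaving a multiplicative $\cut_{\max}$ (the naive choice $\alpha\approx B/M$ would reintroduce it), and then solve the implicit inequality so that the $\cut$-dependence of $S$ appears only logarithmically and the horizon collapses from $\log T$ to $\log\log T$. A secondary point is proving the $\BTB_n$ analogue of Proposition~\ref{prop:hamming-bound}; and since $\alpha^\star$ depends on $B$ and $S$, which are not known in advance, a post-hoc (or doubling/grid) argument is needed to justify setting it, consistent with the ``optimally-tuned $\alpha$'' hypothesis.
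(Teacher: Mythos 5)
Your proposal is correct and follows essentially the same route as the paper's proof: instantiate Theorem~\ref{MainTheorem} with minimal-consistent comparators, tune $\alpha$ to $B/(S+B)$ so the share cost collapses to $\cO(B)$ and $\cut$-dependence survives only inside $\log(1/\alpha)$, use conservativeness to set $T=M$, resolve the self-referential inequality to turn $\log T$ into $\log\log T$, and pass from $\J$ to $H$ via Proposition~\ref{prop:hamming-bound} (with the extra $2\log\frac{n}{2}$ factor relating $\J_{\BTB_n}$ to $\J_{\FB_n}$ for the binary-tree basis). All the key ingredients and their roles match the paper's argument.
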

	Thus the bounds are equivalent up to a factor of $\log n$ although the computation times vary dramatically.
	See Appendix~\ref{sec:maincor} for a technical proof of these results, and details on the selection of the switching parameter $\alpha$. Note that we may avoid the issue of needing to optimally tune $\alpha$ using the following method  proposed by~\cite{Herbster-TBE2} and by~\cite{koolenswitch}. We use a time-varying parameter and on trial $t$ we set $\alpha_t = \nicefrac{1}{t+1}$. We have the following guarantee for this method, see Appendix~\ref{sec:alpha-t-proof} for a proof.
	\begin{proposition}\label{co:alpha-t-bounds}
		For a connected $n$-vertex graph $\cG$ and with randomly sampled spine $\spine$, the \SCS\ algorithm with bases $\FB_n$ and $\BTB_n$ in predicting the online sequence $(i_1,y_1),\ldots,(i_{T},y_{T})$ now with time-varying $\alpha$ set equal to $\nicefrac{1}{t+1}$ on trial $t$ achieves the same asymptotic  mistake bounds as in Corollary~\ref{co:ss} with an optimally-tuned $\alpha$, under the assumption that $\cut_{\spine}(\bu_1) \leq \sum_{i=1}^{|\sset|-1}\J\!(\bmu_{k_i},\bmu_{k_{i+1}})$.
	\end{proposition}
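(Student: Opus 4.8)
The plan is to re-run the relative-entropy potential argument behind Theorem~\ref{MainTheorem} essentially verbatim, but carrying a per-trial parameter $\alpha_t$ through the fixed-share step instead of a single $\alpha$. The only place $\alpha$ enters the analysis of Algorithm~\ref{Main_Alg} is through the two pointwise inequalities $\omega_{t,\se}\ge(1-\alpha_t)\omdot[t-1]$ and $\omega_{t,\se}\ge\alpha_t/|\spset|$ satisfied by the fixed-share step~\eqref{OldFixedShareUpdate} (equivalently the delayed form~\eqref{ShareUpdateLogN}); both continue to hold trial-by-trial when $\alpha$ is replaced by $\alpha_t$. Splitting $d(\bmu_{t+1},\bom_{t+1})-d(\bmu_t,\dot\bom_t)$ into ``continuing'' components (where $\mu_{t,\se}>0$), on which we use the first inequality, and ``new'' components (where $\mu_{t,\se}=0<\mu_{t+1,\se}$), on which we use the second, and then telescoping the potential exactly as in the proof of Theorem~\ref{MainTheorem}, yields the time-varying master bound
\[
M_{\spset}\le \frac{1}{\pi_1}\log|\spset| \;+\; \sum_{t=1}^{M_{\spset}}\frac{1}{\pi_t}\log\frac{1}{1-\alpha_t} \;+\; \sum_{i=1}^{|\sset|-1}\J(\bmu_{k_i},\bmu_{k_{i+1}})\log\frac{|\spset|}{\alpha_{\tau_i}},
\]
where, because the updates are conservative, $t$ indexes mistakes and $\tau_i\le M_{\spset}$ is the mistake-count at which the $i$-th switch introduces its new specialists.

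Next I would substitute $\alpha_t=\frac{1}{t+1}$. For the middle (no-switch) sum this gives $\log\frac{1}{1-\alpha_t}=\log\frac{t+1}{t}$, which telescopes: since $\pi_t$ is constant ($=1/(\cut_{k_i}+1)$) throughout segment $i$, the segment-$i$ contribution collapses to $(\cut_{k_i}+1)\log(\tau_{i+1}/\tau_i)$. For the switch sum, $\log\frac{1}{\alpha_{\tau_i}}=\log(\tau_i+1)\le\log(M_{\spset}+1)$, so that term is at most $\big(\sum_{i}\J(\bmu_{k_i},\bmu_{k_{i+1}})\big)\big(\log|\spset|+\log(M_{\spset}+1)\big)$.

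The key step is to show the telescoped no-switch sum is itself charged to the switching cost. Bounding each log-increment by the total, the no-switch sum is at most $\max_i(\cut_{k_i}+1)\,\log(M_{\spset}+1)$. Because $\J$ counts only specialists that are \emph{added}, the size of any minimal-consistent cover satisfies $\cut_{k_i}+1\le(\cut_1+1)+\sum_{j=2}^{i}\J(\bmu_{k_{j-1}},\bmu_{k_j})$, and the hypothesis $\cut_\spine(\bu_1)\le\sum_{i=1}^{|\sset|-1}\J(\bmu_{k_i},\bmu_{k_{i+1}})$ then gives $\max_i(\cut_{k_i}+1)=\mathcal{O}\big(\sum_{i}\J(\bmu_{k_i},\bmu_{k_{i+1}})\big)$. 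Hence both the no-switch and switch sums are $\mathcal{O}\big((\sum_{i}\J)(\log|\spset|+\log M_{\spset})\big)$, and Proposition~\ref{prop:hamming-bound} replaces $\sum_{i}\J$ by $\sum_{i}H(\bu_{k_i},\bu_{k_{i+1}})$; the $\alpha$-independent first term $\frac{1}{\pi_1}\log|\spset|=(\cut_1+1)\log|\spset|$ supplies the leading $\cut_1\log n$.

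Finally I would resolve the self-reference: the inequality has the shape $M_{\spset}\le A+B\log M_{\spset}$ with $A=\mathcal{O}(\cut_1\log|\spset|+(\sum_{i}H)\log|\spset|)$ and $B=\mathcal{O}(\sum_{i}H)$. Feeding in any crude a priori bound $M_{\spset}=\mathrm{poly}(n)\cdot\log T$ (e.g.\ the fixed-$\alpha$ guarantee of Corollary~\ref{co:ss}) turns $\log M_{\spset}$ into $\mathcal{O}(\log\log T+\log n+\log|\sset|)$, reproducing the $\log\log T$, $\log n$, and $\log|\sset|$ factors of Corollary~\ref{co:ss}; with $|\FB_n|=\Theta(n^2)$ this is the $\FB_n$ bound, and the extra factor of $\log n$ from the cover-size blow-up in Proposition~\ref{pro:hbasis} gives the $\BTB_n$ bound. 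I expect the main obstacle to be the first step: confirming that the potential telescoping of Theorem~\ref{MainTheorem} is genuinely insensitive to making $\alpha$ time-varying and, under conservative updating, that the correct $\alpha_{\tau_i}$ multiplies each switch—after which the remaining steps are careful but routine bookkeeping, with the hypothesis $\cut_\spine(\bu_1)\le\sum_{i}\J$ doing the essential work of preventing the initial cover size from leaking a $\cut_1\log\log T$ term outside the target bound.
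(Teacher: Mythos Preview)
Your overall strategy matches the paper's: derive a time-varying analogue of~\eqref{TheoremBasicBound}, substitute $\alpha_t=\frac{1}{t+1}$, control the no-switch sum by $\bigl(\max_t\frac{1}{\pi_t}\bigr)\log M_{\spset}$, bound $\max_t\frac{1}{\pi_t}\le\frac{1}{\pi_1}+\sum_i\J(\bmu_{k_i},\bmu_{k_{i+1}})$ and invoke the hypothesis $\cut_\spine(\bu_1)\le\sum_i\J$ to absorb the first term, and finally resolve the self-reference. Your treatment of the no-switch sum via the exact telescoping $\sum_t\log\frac{t+1}{t}$ within each segment is a slightly tidier variant of the paper's route through $\log\frac{1}{1-\alpha_t}\le\frac{c}{t}$ and the harmonic-sum estimate $\sum_{t\le T}\frac{1}{t}\le\ln(eT)$, but the outcome is the same.

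There is one genuine gap: your resolution of the self-referential inequality $M_{\spset}\le A+B\log M_{\spset}$ appeals to Corollary~\ref{co:ss} as an ``a priori'' bound on $M_{\spset}$. That corollary bounds the mistakes of the \emph{optimally-tuned} algorithm, not the time-varying-$\alpha$ algorithm you are analysing, so you cannot invoke it here; the only trivial a priori bound available is $M_{\spset}\le T$, which yields $\log T$ rather than $\log\log T$. The fix is the paper's iterate-once trick (identical to that in the proof of Theorem~\ref{thm:bayes}): substitute the bound into itself to get $M_{\spset}\le A+B\log(A+B\log M_{\spset})\le A+B\log A+B\log B+B\log\log M_{\spset}$, and only \emph{then} use $M_{\spset}\le T$. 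Since $A,B=\mathcal{O}(n|\sset|)$, the $B\log A$ and $B\log B$ terms are absorbed into the $(\log n+\log|\sset|)$ factor you already carry, and the rest of your bookkeeping goes through unchanged.
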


	\newcommand{\niterations}{$25$}  
	\section{Experiments}\label{sec:Experiments}
	In this section we present results of experiments on real data. The City of Chicago currently contains $608$ public bicycle stations for its ``Divvy Bike'' sharing system. Current and historical data is available  from the City of Chicago\footnote{\tiny{\url{https://data.cityofchicago.org/Transportation/Divvy-Bicycle-Stations-Historical/eq45-8inv}}} containing a variety of features for each station, including latitude, longitude, number of docks, number of operational docks, and number of docks occupied. The latest data on each station is published approximately every ten minutes.

	We used a sample of $72$ hours of data, consisting of three consecutive weekdays in April $2019$. The first $24$ hours of data were used for parameter selection, and the remaining $48$ hours of data were used for evaluating performance.  
	On each ten-minute snapshot we took the percentage of empty docks of each station. We created a binary labeling from this data by setting a threshold of $50\%$. Thus each bicycle station is a vertex in our graph and the label of each vertex indicates whether that station is `mostly full' or `mostly empty'.
	Due to this thresholding the labels of some `quieter' stations were observed not to switch, as the percentage of available docks rarely changed. These stations tended to be on the `outskirts', and thus we excluded these stations from our experiments, giving $404$ vertices in our graph. 

	Using the geodesic distance between each station's latitude and longitudinal position a connected graph was built using the union of a $k$-nearest neighbor graph ($k=3$) and a minimum spanning tree. For each instance of our algorithm the graph was then transformed in the manner described in Section~\ref{subsec:random-spanning-trees-and-linearization}, by first drawing a spanning tree uniformly at random and then linearizing using depth-first search.

	As natural benchmarks for this setting we considered the following four methods.
	$1.)$ For all vertices predict with the most frequently occurring label of the entire graph from the training data (``Global'').
	$2.)$ For each vertex predict with its most frequently occurring label from the training data (``Local'').
	$3.)$ For all vertices at any given time predict with the most frequently occurring label of the entire graph at that time from the training data (``Temporal-Global'')
	$4.)$ For each vertex at any given time predict with that vertex's label observed at the same time in the training data (``Temporal-Local''). 
	We also compare our algorithms against a kernel Perceptron proposed by~\cite{Herbster-SwitchingGraphs} for predicting switching graph labelings (see Appendix~\ref{sec:perceptron-appendix} for details).
	
	Following the experiments of~\cite{CGVZ10} in which ensembles of random spanning trees were drawn and aggregated by an unweighted majority vote, we tested the effect on performance of using ensembles of instances of our algorithms, aggregated in the same fashion. We tested ensemble sizes in $\{1,3,5,9,17,33,65\}$, using odd numbers to avoid ties. 

	For every ten-minute snapshot (labeling) we queried $30$ vertices uniformly at random (with replacement) in an online fashion, giving a sequence of $8640$ trials over $48$ hours. The average performance over \niterations\ iterations is shown in Figure~\ref{fig:experiment-plot}. There are several surprising observations to be made from our results. Firstly, both \SCS\ algorithms performed significantly better than all benchmarks and competing algorithms. Additionally basis $\BTB_{n}$ outperformed basis $\FB_{n}$ by quite a large margin, despite having the weaker bound and being exponentially faster. 
	Finally we observed a significant increase in performance of both \SCS\ algorithms by increasing the ensemble size (see Figure~\ref{fig:experiment-plot}), additional details on these experiments and results of all ensemble sizes are given in Appendix~\ref{sec:appendix-experiment-details}.
	
	
	Interestingly when tuning $\alpha$ we found basis $\BTB_{n}$ to be very robust, while $\FB_{n}$ was very sensitive. This observation combined with the logarithmic per-trial time complexity suggests that \SCS\ with $\BTB_{n}$ has promise to be a very practical algorithm. 
	
	\begin{figure}[tbp]
		\begin{subfigure}{0.55\linewidth}
			\centering
			\includegraphics[width=\linewidth]{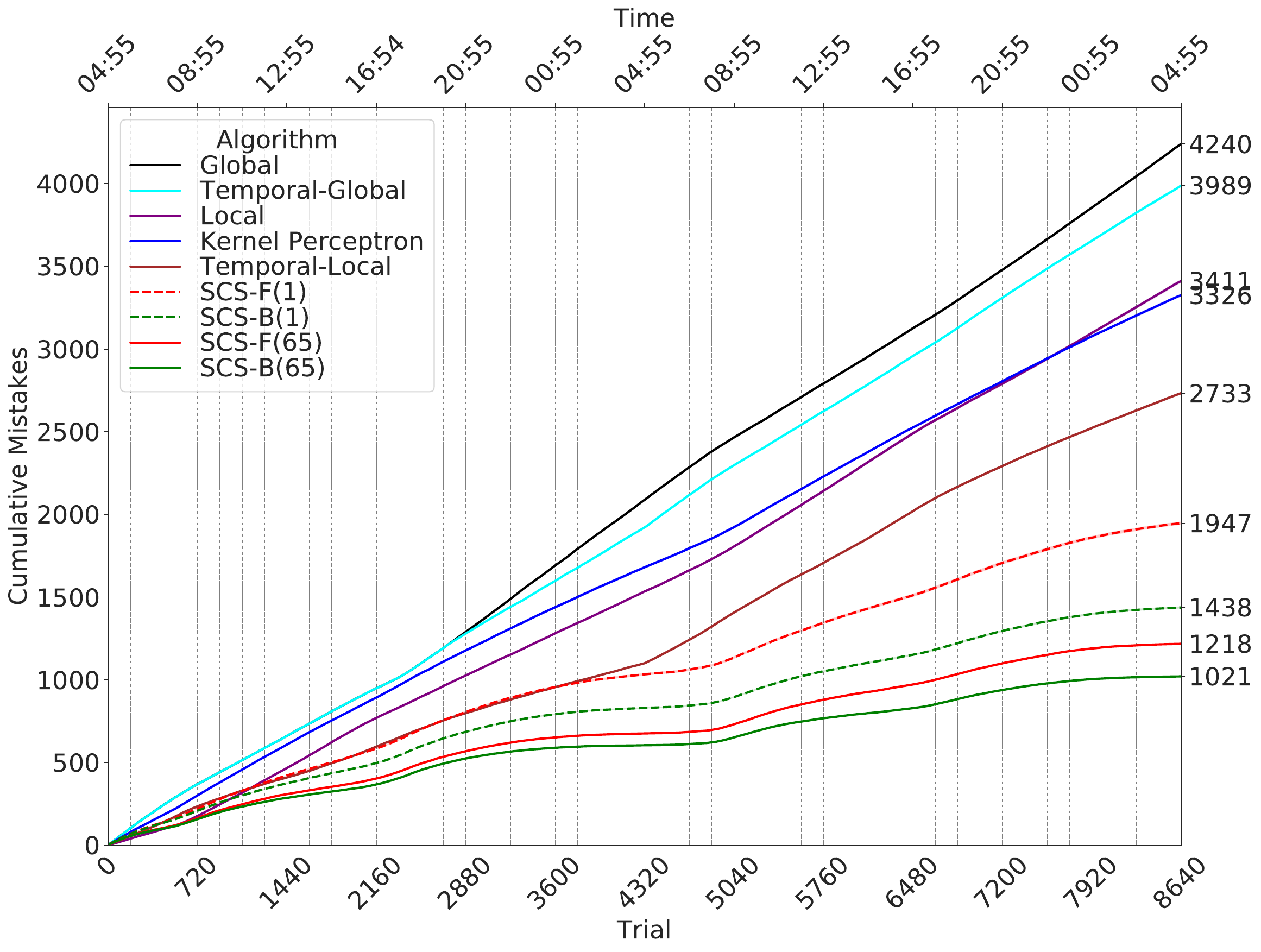}
		\end{subfigure}
		\begin{subfigure}{0.44\linewidth}
			\begin{subfigure}{0.49\linewidth}
				\includegraphics[clip, trim=5cm 18.2cm 9.7cm 1.4cm,width=\linewidth]{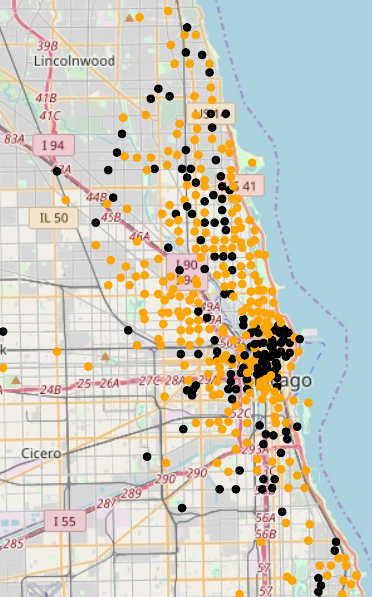}
			\end{subfigure}
			\begin{subfigure}{0.49\linewidth}
				\includegraphics[clip, trim=5cm 18.2cm 9.7cm 1.4cm,width=\linewidth]{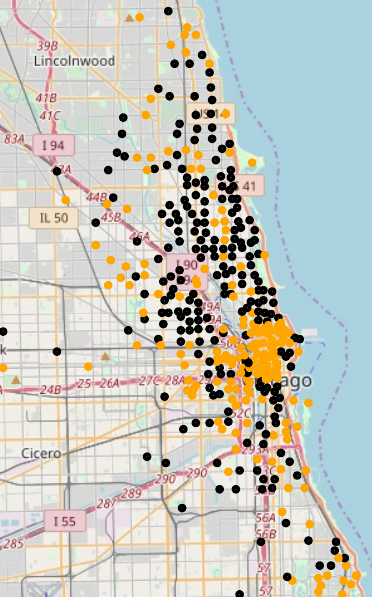}
			\end{subfigure}
		\end{subfigure}
		\vspace*{-.3cm}
		\caption{\footnotesize Left: Mean cumulative mistakes over \niterations\ iterations for all algorithms and benchmarks over $48$ hours ($8640$ trials) on a $404$-vertex graph. A comparison of the mean performance of \SCS\ with bases $\FB_{n}$ and $\BTB_{n}$ (\SCS-F and \SCS-B respectively) using an ensemble of size $1$ and $65$ is shown. Right: An example of two binary labelings taken from the morning and evening of the first $24$ hours of data. An `orange' label implies that station is $<50\%$ full and a `black' label implies that station is $\geq50\%$ full.}
		\label{fig:experiment-plot}
	\end{figure}
	
	\section{Conclusion}\label{sec:conclusion}
	Our primary result was an algorithm for predicting switching graph labelings with a per-trial prediction time of 
	$O(\log n)$ and a mistake bound that {\em smoothly} tracks changes to the graph labeling over time.  In the long version of this paper we plan to extend the analysis of the primary algorithm to the expected regret setting
	by relaxing our simplifying assumption of the {\em well-formed} comparator sequence that is {\em minimal-consistent} with the labeling sequence.
	From a technical perspective the open problem that we found most intriguing is to eliminate the $\log\log T$ term from our bounds.  The natural approach to this would be to replace the conservative {\em fixed-share} update with a  {\em variable-share} update~\citep{Herbster-TrackingExpert}; in our efforts however we found many technical problems with this approach.  On both the more practical and speculative side; we observe that the specialists sets
	$\BTB_n$, and $\FB_n$ were chosen to ``prove bounds''.  In practice we can use any {\em hierarchical} graph clustering algorithm to produce a {\em complete} specialist set and furthermore multiple such clusterings may be pooled.  Such a pooled set of subgraph ``motifs'' could be then be used for example in a multi-task setting (see for example,~\citep{kaw12}).

	\bibliography{SwitchingGraphSpecialists-5.bib}
	
	\newpage
	\appendix
	\section*{Appendix}\label{sec:appendix}

	\section{Proof of Theorem~\ref{MainTheorem}}\label{sec:MainTheoremProof-Appendix}
	\begin{proof}
		Recall that the cached share update~\eqref{ShareUpdateLogN} is equivalent to performing~\eqref{OldFixedShareUpdate}. We thus simulate the latter update in our analysis. 
		We first argue the inequality
		\begin{equation}\label{freunds-realizable-reduction}
		\Ive{\predyt\neq\yt} \le \frac{1}{\bmu_t(\Ct)}\left(\RE[\bmu_t]{\bom_{t}} - \RE[\bmu_t]{\bomdot}\right),    
		\end{equation}
		as this is derived by observing that
		\begin{align*}
		\RE[\bmu_t]{\bom_{t}} - \RE[\bmu_t]{\bomdot} &= 
		\sum_{\Specna\in{\spset} }\mu_{t,\Specna}\log{\frac{\omdot}{\omega_{t,\Specna}}}
		\\
		& = \sum_{\Specna\in{\Ct} }\mu_{t,\Specna}\log{\frac{\omdot}{\omega_{t,\Specna}}} \\
		& \ge  \bmu_t(\Ct) \Ive{\predyt\neq\yt}\,,
		\end{align*}
		where the second line follows the fact that $\mu_{t,\Specna}\log{\frac{\omdot}{\omega_{t,\Specna}}} = 0$ if $\se\not\in\Ct$ as either the specialist $\se$ predicts `$\zzz$' and $\omdot = \omega_{t,\Specna}$ or it predicts incorrectly and hence $\mu_{t,\se}=0$. The third line follows as for $\se\in\Ct$,  $\frac{\omdot}{\omega_{t,\Specna}}\ge 2$ if there has been a mistake on trial $t$ and otherwise the ratio is $\ge 1$. Indeed, since Algorithm~\ref{Main_Alg} is conservative, this ratio is exactly $1$ when no mistake is made on trial $t$, thus without loss of generality we will assume the algorithm makes a mistake on every trial.
		
		For clarity we will now use simplified notation and let $\pi_t := \bmutYt$.
		We now prove the following inequalities which we will add to \eqref{freunds-realizable-reduction} to create a telescoping sum of relative entropy terms and entropy terms.
		\begin{align}
		\frac{1}{\pi_t}\left[\RE[\bmu_t]{\bomdot} - \RE[\bmu_t]{\bom_{t+1}}\right] &\geq -	\frac{1}{\pi_t}\log{\frac{1}{1-\alpha}}\label{eq:inequality-1}\,,\\
		\frac{1}{\pi_t}\RE[\bmu_{t}]{\bom_{t+1}} - \frac{1}{\pi_{t+1}}\RE[\bmu_{t+1}]{\bom_{t+1}} &\geq-\frac{1}{\pi_t}\entropy[t] + \frac{1}{\pi_{t+1}}\entropy[t+1] - \J\left(\bmu_t, \bmu_{t+1}\right)\log{\frac{|\spset|}{\alpha}}\label{eq:inequality-2}\,.
		\end{align}
		Firstly \eqref{eq:inequality-1} is proved with the following
		\begin{equation*}
		\RE[\bmu_t]{\bomdot} - \RE[\bmu_t]{\bom_{t+1}}
		= \sum_{\Specna\in{\spset} }\mu_{t,\Specna}\log{\frac{\omega_{t+1,\Specna}}{\omdot}}
		\geq \sum_{\Specna\in{\spset} }\mu_{t,\Specna}\log{\left(\frac{(1-\alpha)\omdot}{\omdot}\right)}
		= \log{\left(1-\alpha\right)}\,,
		\end{equation*}
		where the inequality has used $\omega_{t+1,\Specna} \geq (1-\alpha)\omdot$ from \eqref{OldFixedShareUpdate}.
		
		To prove~\eqref{eq:inequality-2} we first define the following sets.
		\begin{align*}
		\forgetset &:= \{\Specna\in\spset: \mu_{t-1,\Specna}\neq 0, \mu_{t,\Specna} =0 \}\,,\\
		\contset &:= \{\Specna\in\spset: \mu_{t-1,\Specna}\neq 0, \mu_{t,\Specna} \neq 0 \}\,,\\
		\newset &:= \{\Specna\in\spset: \mu_{t-1,\Specna} = 0, \mu_{t,\Specna} \neq 0 \}\,.
		\end{align*}
		We now expand the following 
		\begin{align}\label{eq:long-expansion-d(u,v)}
		&\frac{1}{\pi_t}\RE[\bmu_{t}]{\bom_{t+1}} - \frac{1}{\pi_{t+1}}\RE[\bmu_{t+1}]{\bom_{t+1}}\notag\\
		&\qquad= 	\frac{1}{\pi_t}\RE[\bmu_{t}]{\bom_{t+1}} -\frac{1}{\pi_t}\RE[\bmu_{t+1}]{\bom_{t+1}} + \frac{1}{\pi_{t}}\RE[\bmu_{t+1}]{\bom_{t+1}} - \frac{1}{\pi_{t+1}}\RE[\bmu_{t+1}]{\bom_{t+1}}\notag\\
		&\qquad=  \frac{1}{\pi_t}\sum_{\Specna\in{\spset}}\mu_{t,\Specna}\log{\frac{\mu_{t,\Specna}}{\omega_{t+1,\Specna}}} -  \frac{1}{\pi_t}\sum_{\Specna\in{\spset}}\mu_{t+1,\Specna}\log{\frac{\mu_{t+1,\Specna}}{\omega_{t+1,\Specna}}}\notag\\
		&\qquad\qquad +  \frac{1}{\pi_t}\sum_{\Specna\in{\spset}}\mu_{t+1,\Specna}\log{\frac{\mu_{t+1,\Specna}}{\omega_{t+1,\Specna}}} -  \frac{1}{\pi_{t+1}}\sum_{\Specna\in{\spset}}\mu_{t+1,\Specna}\log{\frac{\mu_{t+1,\Specna}}{\omega_{t+1,\Specna}}}\notag\\
		&\qquad= -\frac{1}{\pi_t}\entropy + 	\frac{1}{\pi_t}\entropy[t+1] +  \sum_{\Specna\in{\spset}}\left(\frac{\mu_{t,\Specna}}{\pi_{t}} - \frac{\mu_{t+1,\Specna}}{\pi_{t}}\right)\log{\frac{1}{\omega_{t+1,\Specna}}}\notag\\
		&\qquad\qquad - \frac{1}{\pi_t}\entropy[t+1] + \frac{1}{\pi_{t+1}}\entropy[t+1] + \sum_{\Specna\in{\spset}}\left(\frac{\mu_{t+1,\Specna}}{\pi_{t}} - \frac{\mu_{t+1,\Specna}}{\pi_{t+1}}\right)\log{\frac{1}{\omega_{t+1,\Specna}}}\,.
		\end{align}
		Recall that a comparator $\bmu\in\Delta_{|\spset|}$ is \textit{well-formed} if $\forall\ v\in V$, there exists a {\em unique}  $\Specna\in\spset$ such that $\Specna(v)\neq\square$ and $\mu_{\Specna}>0$, and furthermore there exists a $\pi\in (0,1]$ such that $\forall \Specna\in\spset : \mu_{\Specna}\in\{0,\pi\}$, i.e., each specialist in the support of $\bmu$ has the same mass $\pi$ and these specialists disjointly cover the input space ($V$). Thus, by collecting terms into the three sets $\forgetset[t+1]$, $\contset[t+1]$, and $\newset[t+1]$ we have
		\begin{align}\label{eq:set-cancel-expansion-1}
		&\sum_{\Specna\in{\spset}}\left(\frac{\mu_{t,\Specna}}{\pi_{t}} - \frac{\mu_{t+1,\Specna}}{\pi_{t}}\right)\log{\frac{1}{\omega_{t+1,\Specna}}}\notag\\
		&\qquad=\sum_{\Specna\in{\forgetset[t+1]}}\frac{\mu_{t,\Specna}}{\pi_t}\log{\frac{1}{\omega_{t+1,\Specna}}}
		+ \sum_{\Specna\in{\contset[t+1]}}\left(\frac{\mu_{t,\Specna}}{\pi_{t}} - \frac{\mu_{t+1,\Specna}}{\pi_{t}}\right)\log{\frac{1}{\omega_{t+1,\Specna}}}
		 -\sum_{\Specna\in{\newset[t+1]}}\frac{\mu_{t+1,\Specna}}{\pi_{t}}\log{\frac{1}{\omega_{t+1,\Specna}}}\notag\\
		&\qquad= \sum_{\Specna\in{\forgetset[t+1]}}\frac{\mu_{t,\Specna}}{\pi_t}\log{\frac{1}{\omega_{t+1,\Specna}}} + \sum_{\Specna\in{\contset[t+1]}}\left(1 - \frac{\mu_{t+1,\Specna}}{\pi_{t}}\right)\log{\frac{1}{\omega_{t+1,\Specna}}}
		-\sum_{\Specna\in{\newset[t+1]}}\frac{\mu_{t+1,\Specna}}{\pi_{t}}\log{\frac{1}{\omega_{t+1,\Specna}}}\,,
		\end{align}
		and similarly
		\begin{align}\label{eq:set-cancel-expansion-2}
		&\sum_{\Specna\in{\spset}}\left(\frac{\mu_{t+1,\Specna}}{\pi_{t}} - \frac{\mu_{t+1,\Specna}}{\pi_{t+1}}\right)\log{\frac{1}{\omega_{t+1,\Specna}}}\notag\\
		&\qquad= \sum_{\Specna\in{\contset[t+1]}}\left(\frac{\mu_{t+1,\Specna}}{\pi_{t}} - 1\right)\log{\frac{1}{\omega_{t+1,\Specna}}} + \sum_{\Specna\in{\newset[t+1]}}\left(\frac{\mu_{t+1,\Specna}}{\pi_{t}} - 1\right)\log{\frac{1}{\omega_{t+1,\Specna}}}\,.
		\end{align}
		Substituting~\eqref{eq:set-cancel-expansion-1} and~\eqref{eq:set-cancel-expansion-2} into~\eqref{eq:long-expansion-d(u,v)} and simplifying gives
		\begin{align}
		&\frac{1}{\pi_t}\RE[\bmu_{t}]{\bom_{t+1}} - \frac{1}{\pi_{t+1}}\RE[\bmu_{t+1}]{\bom_{t+1}}\notag\\
		&\qquad= -\frac{1}{\pi_t}\entropy + \frac{1}{\pi_{t+1}}\entropy[t+1]
		+ \sum_{\Specna\in{\forgetset[t+1]}}\frac{\mu_{t,\Specna}}{\pi_t}\log{\frac{1}{\omega_{t+1,\Specna}}}
		- \sum_{\Specna\in{\newset[t+1]}}\log{\frac{1}{\omega_{t+1,\Specna}}}\notag\\
		&\qquad\geq -\frac{1}{\pi_t}\entropy + \frac{1}{\pi_{t+1}}\entropy[t+1] - \left|\newset[t+1]\right|\log{\frac{|\spset|}{\alpha}}\,,
		\end{align}
		where the inequality has used the fact that $\frac{\alpha}{|\spset|}\leq \omega_{t+1,\Specna}\leq 1$ from \eqref{OldFixedShareUpdate}.
		
		Summing over all trials then leaves a telescoping sum of relative entropy terms, a cost of $\frac{1}{\pi_t}\log{\frac{1}{1-\alpha}}$ on each trial, and $|\newset[t+1]|\log{\frac{|\spset|}{\alpha}}$ for each switch. Thus,
		\begin{align}
		\sum_{t=1}^{T}\Ive{\predyt\neq\yt} &\leq\frac{1}{\pi_1}\RE[\bmu_1]{\bom_1}
		+ \frac{1}{\pi_1}\entropy[1]
		+ \sum_{t=1}^{T}\frac{1}{\pi_t}\log{\frac{1}{1-\alpha}} + \sum_{i=1}^{|\sset|-1}\J\left(\bmu_{k_i}, \bmu_{k_{i+1}}\right)\log{\frac{|\spset|}{\alpha}}\,,
		\end{align}
		where $\J(\bmu_{k_i}, \bmu_{k_{i+1}})=|\newset[k_{i+1}]|$, and since $\bom_1 = \frac{1}{|\spset|}\bm 1$, we can combine the remaining entropy and relative entropy terms to give $\frac{1}{\pi_1}\RE[\bmu_1]{\bom_1} + \frac{1}{\pi_1}\entropy[1] = \frac{1}{\pi_1}\log{|\spset|}$, concluding the proof.
	\end{proof}

	\newcommand{\clusterset}[1][\bm{u}]{\mathcal{L}_{#1}}
	\newcommand{\clustercoveringset}[1][(l,r)]{\BTB_{#1}}
	\newcommand{\cluster}{(l,r)}
	\newcommand{\clustercomplexity}{\delta(\clustercoveringset)}
	\newcommand{\setofclustercoveringsets}{\mathcal{D}^{\cluster}}
	\newcommand{\mincomplexset}{\BTB_{\cluster}^*}
	\newcommand{\optcluster}{(l^*,r^*)}
	\newcommand{\clustercoveringsetHAT}{\hat{\BTB}_{(l,r)}}
	\section{Proof of Proposition~\ref{pro:hbasis}}
	We recall the proposition:
	
	\textit{The basis $\BTB_{n}$ is complete.  Furthermore, for any labeling $\bu\in\{-1,1\}^n$ there exists a covering set $\cC_{\bu} \subseteq \BTB_{n}$ such that $|\cC_{\bu}| \le 2 (\cutS+1) \lceil\log_2 \frac{n}{2}\rceil$. }
	
	We first give a brief intuition of the proof; any required terms will be defined more completely later. For a given labeling $\bu\in\{-1,1\}^n$ of cut-size $\cutS$, the spine $\spine$ can be cut into $\cutS+1$ \textit{clusters}, where a cluster is a contiguous segment of vertices with the same label. We will upper bound the maximum number of cluster specialists required to cover a single cluster, and therefore obtain an upper bound for $|\cC_{\bu}|$ by summing over the $\cutS+1$ clusters. 
	
	Without loss of generality  we assume $n=2^r$ for some integer $r$ and thus the structure of $\BTB_n$ is analogous to a perfect binary tree of depth $d=\log_2{n}$. Indeed, for a fixed label parameter $y$ we will adopt the terminology of binary trees such that for instance we say specialist $\Spec{i}{j}{y}$ for $i\neq j$ has a so-called \textit{left-child} $\Spec{i}{\lfloor \frac{i+j}{2}\rfloor}{y}$ and \textit{right-child} $\Spec{\lceil \frac{i+j}{2}\rceil}{j}{y}$. Similarly, we say that $\Spec{i}{\lfloor \frac{i+j}{2}\rfloor}{y}$ and $\Spec{\lceil \frac{i+j}{2}\rceil}{j}{y}$ are \textit{siblings}, and $\Spec{i}{j}{y}$ is their \textit{parent}. Note that any specialist is both an ancestor and a descendant of itself, and a proper descendant of a specialist is a descendant of one of its children. Finally the \textit{depth} of specialist $\Spec{i}{j}{y}$ is defined to be equal to the depth of the corresponding node in a binary tree, such that $\Spec{1}{n}{y}$ is of depth $0$, $\Spec{1}{\frac{n}{2}}{y}$ and $\Spec{\frac{n}{2}+1}{n}{y}$ are of depth $1$, etc.
	
	The first claim of the proposition is easy to prove as $\{\Spec{i}{i}{-1},\Spec{i}{i}{1} : i\in[n]\}\subset \BTB_n$ and thus any labeling $\bu\in\{-1,1\}^n$ can be covered. We now prove the second claim of the proposition.
	
	We will denote a uniformly-labeled contiguous segment of vertices by the pair $\cluster$, where $l,r\in[n]$ are the two end vertices of the segment. For completeness we will allow the trivial case when $l=r$. 
	Given a labeling $\bu\in\{-1,1\}^n$, let $\clusterset:=\{\cluster : 1\leq l\leq r\leq n; u_l=\ldots =u_r; u_{l-1}\neq u_l; u_{r+1}\neq u_r\}$ be the set of maximum-sized contiguous segments of unifmormly-labeled vertices. Note that $u_{l-1}$ or $u_{r+1}$ may be vacuous. When the context is clear, we will also describe $\cluster$ as a \textit{cluster}, and as the set of vertices $\{l,\ldots,r\}$.
	
	For a given $\bu\in\{-1,1\}^n$ and cluster $\cluster\in\clusterset$, we say $\clustercoveringset\subseteq\BTB_n$ is an $\cluster$-covering set with respect to $\bu$ if for all $\Spec{i}{j}{y}\in\clustercoveringset$ we have $l\leq i, j\leq r$, and if for all $k\in\cluster$ there exists some $\Spec{i}{j}{y}\in\clustercoveringset$ such that $ i\leq k \leq j$ and $y=u_k$. That is, every vertex in the cluster is `covered' by at least one specialist and no specialists cover any vertices $k\notin \cluster$. We define $\setofclustercoveringsets$ to be the set of all possible $\cluster$-covering sets with respect to $\bu$.
	
	We now define
	\begin{equation*}
	\clustercomplexity := |\clustercoveringset|
	\end{equation*}
	to be the \textit{complexity} of $\clustercoveringset\in\setofclustercoveringsets$.
	
	For a given $\bu\in\{-1,1\}^n$ and cluster $\cluster\in\clusterset$, we wish to produce an $\cluster$-covering set of \textit{minimum} complexity, which we denote $\mincomplexset:=\argmin\limits_{\clustercoveringset\in\setofclustercoveringsets}\clustercomplexity$. Note that an $\cluster$-covering set of minimum complexity cannot contain any two specialists which are siblings, since they can be removed from the set and replaced by their parent specialist. 

	\begin{lem}\label{lem:BTBmax2ofsamedepth}
		For any $\bu\in\{-1,1\}^n$, for any $\cluster\in\clusterset$, the $\cluster$-covering set of minimum complexity, $\mincomplexset=\argmin\limits_{\clustercoveringset\in\setofclustercoveringsets}\clustercomplexity$ contains at most two specialists of each unique depth.
	\end{lem}
	\begin{proof}
		We first give an intuitive sketch of the proof. For a given $\bu\in\{-1,1\}^n$ and cluster $\cluster\in\clusterset$ assume that there are at least three specialists of equal depth in $\mincomplexset$, then any of these specialists that are in the `middle' may be removed, along with any of their siblings or proper descendants that are also members of $\mincomplexset$ without creating any `holes' in the covering, decreasing the complexity of $\mincomplexset$.

		We use a proof by contradiction. Suppose for contradiction that for a given $\bu\in\{-1,1\}^n$ and $\cluster\in\clusterset$, the $\cluster$-covering set of minimum complexity, $\mincomplexset$, contains three distinct specialists of the same depth, $\Spec{a}{b}{y}, \Spec{c}{d}{y}, \Spec{e}{f}{y}$. 
		Without loss of generality let $a,b < c,d < e,f$. Note that we have $l\leq a < f \leq r$. We consider the following two possible scenarios: when two of the three specialists are siblings, and when none are.
		
		If $\Spec{a}{b}{y}$ and $\Spec{c}{d}{y}$  are siblings, then we have $\Spec{a}{d}{y}\in\BTB_n$ and thus $\{\Spec{a}{d}{y}\}\cup\mincomplexset\setminus\{\Spec{a}{b}{y}, \Spec{c}{d}{y}\}$ is an $\cluster$-covering set of smaller complexity, leading to a contradiction. The equivalent argument holds if $\Spec{c}{d}{y}$ and $\Spec{e}{f}{y}$  are siblings.
		
		If none are siblings, then let $\Spec{c'}{d'}{y}$ be the sibling of $\Spec{c}{d}{y}$ and let $\Spec{C}{D}{y}$ be the parent of $\Spec{c}{d}{y}$ and $\Spec{c'}{d'}{y}$. Note that $a,b < c',d',c,d$ and $c',d',c,d < e,f$ and hence $l < C<D < r$. If an ancestor of $\Spec{C}{D}{y}$ is in $\mincomplexset$, then  $\mincomplexset\setminus\{\Spec{c}{d}{y}\}$ is an $\cluster$-covering set of smaller complexity, leading to a contradiction.
		Alternatively, if no ancestor of $\Spec{C}{D}{y}$ is in $\mincomplexset$, then $\Spec{c'}{d'}{y}$ or some of its proper descendants must be in $\mincomplexset$, otherwise there exists some vertex $k'\in(c',d')$ such that there exists no specialist $\Spec{i}{j}{y}\in\mincomplexset$ such that $i\leq k' \leq j$, and therefore $\mincomplexset$ would not be an $(l,r)$-covering set. Let $\Spec{p}{q}{y}$ be a descendant of $\Spec{c'}{d'}{y}$ which is contained in $\mincomplexset$. Then $\{\Spec{C}{D}{y}\}\cup\mincomplexset\setminus\{\Spec{c}{d}{y},\Spec{p}{q}{y}\}$ is an $\cluster$-covering set of smaller complexity, leading to a contradiction.
		
		We conclude that there can be no more than $2$ specialists of the same depth in $\mincomplexset$ for any $\bu\in\{-1,1\}^n$ and any $\cluster\in\clusterset$.
	\end{proof}
	We now prove an upper bound on the maximum minimum-complexity of an $\cluster$-covering set under any labeling $\bu$.
	
	\begin{corollary}
		For all $\bu \in\{-1,1\}^n$,    
		\begin{equation}
		\label{eqn:max-min-condition-proposition-proof}
		\max_{\cluster\in \clusterset}\min_{\clustercoveringset\in\setofclustercoveringsets}{\clustercomplexity} \leq 2\log{\frac{n}{2}}\,.
		\end{equation}
	\end{corollary}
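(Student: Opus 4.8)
The plan is to recognize that the left-hand side of~\eqref{eqn:max-min-condition-proposition-proof} is exactly the size of the smallest \emph{canonical dyadic decomposition} of the worst-case contiguous segment, and to bound it by a one-sided covering argument rather than by the cruder ``two-per-depth'' count of Lemma~\ref{lem:BTBmax2ofsamedepth}. As in the proposition I assume $n=2^r$ with $r\ge 2$, so that $\BTB_n$ is a complete binary tree of depth $d=\log_2 n$ whose nodes (for each fixed label $y$) are precisely the dyadic intervals of $[n]$. Fix a labeling $\bu$ and a cluster $\cluster\in\clusterset$ with label $y$. Since every $\Spec{i}{j}{y}\in\clustercoveringset$ must satisfy $l\le i,j\le r$, and since two overlapping dyadic blocks are nested (so the larger is redundant in a minimum cover), $\mincomplexset$ is a set of \emph{disjoint} dyadic blocks that exactly tile $[l,r]$ and contains no two siblings (as already noted before the lemma); that is, it is the canonical decomposition of $[l,r]$.

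First I would reduce to a single side via the least common ancestor. Let $v^{*}$ be the shallowest dyadic block containing $[l,r]$, at depth $s$, with children $v_L,v_R$ at depth $s+1$. If $[l,r]=v^{*}$ the cover is a single node and we are done, so assume $[l,r]$ straddles $v_L$ and $v_R$ (otherwise $v^{*}$ would not be shallowest). Then the tiling decomposes as a cover of the suffix $[l,\mathrm{right}(v_L)]$ inside $v_L$ together with a cover of the prefix $[\mathrm{left}(v_R),r]$ inside $v_R$, so $\clustercomplexity$ for $\mincomplexset$ is the sum of the sizes of the two one-sided minimal covers.

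The key step is the one-sided estimate: \emph{the minimal canonical cover of a one-sided suffix (or, symmetrically, prefix) of a dyadic block of height $h$ uses at most $\max(1,h)$ nodes.} I would prove this by a short induction on $h$: the base $h=0$ is a single leaf, costing $1$; for the step, a strict suffix of a height-$h$ block either lies entirely within its right child (costing at most $\max(1,h-1)$ by induction) or is the whole right child together with a strict suffix of the left child (costing at most $1+(h-1)=h$, this second case arising only for $h\ge 2$). Applying this to $v_L$ and $v_R$, each of height $h=d-s-1$, yields
\[
\clustercomplexity \;\le\; 2\,\max\!\left(1,\,d-s-1\right)\,.
\]
Because $s\ge 0$, the right-hand side is largest when $s=0$, where it equals $2(d-1)=2\log_2\tfrac{n}{2}$, and for $s\ge 1$ it is no larger; taking the maximum over all clusters $\cluster\in\clusterset$ and all labelings $\bu$ gives~\eqref{eqn:max-min-condition-proposition-proof}.

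The main obstacle is pinning down the constant exactly. The naive bound — at most two nodes at each of the depths $s+1,\dots,d$, as furnished by Lemma~\ref{lem:BTBmax2ofsamedepth} — only yields roughly $2d$, and even after observing that $v_L,v_R$ are siblings (hence not both present) it gives $2d-1$, which is off at the top of the tree. The sharp value $2\log_2\tfrac n2$ comes precisely from the one-sided induction, which encodes the trade-off that a side aligning high up at depth $s+1$ contributes only a single node rather than one node at \emph{every} deeper level; it is this trade-off, and not a per-depth count, that removes the spurious additive levels.
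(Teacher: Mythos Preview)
Your proof is correct and takes a genuinely different route from the paper. The paper's argument is a direct appeal to Lemma~\ref{lem:BTBmax2ofsamedepth}: at most two specialists per depth, together with the observation that the two depth-$1$ nodes are siblings and hence cannot both appear; from this it writes ``$2(d-2)=2\log\tfrac n2$''. You bypass the lemma entirely: you pass to the canonical dyadic tiling of $[l,r]$, localise at the least common ancestor $v^{*}$ at depth $s$, and bound each of the two one-sided pieces by the recursion $\max(1,h)$ with $h=d-s-1$, giving $2(d-1)=2\log\tfrac n2$ on the nose.

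What your approach buys is precisely the constant. As you observe in your final paragraph, the per-depth count delivered by Lemma~\ref{lem:BTBmax2ofsamedepth} only yields $2d-1$ once the depth-$1$ siblings observation is folded in (one at depth~$1$, two at each of depths $2,\dots,d$), and the paper's displayed identity ``$2(d-2)=2\log\tfrac n2$'' is itself mis-stated, since $2\log\tfrac n2=2(d-1)$. The step that actually removes the extra unit---namely that if a depth-$1$ block is present the remainder is a single one-sided prefix or suffix and hence costs at most $d-1$ rather than $2(d-1)$---is not spelled out in the paper, whereas your one-sided induction makes this trade-off explicit. So the paper's route is shorter on the page but does not, as written, close the gap to the stated bound; your LCA-plus-induction argument does.
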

	\begin{proof}
		For any $\bu\in\{-1,1\}^n$, and $\cluster\in\clusterset$, since $\mincomplexset$ can contain at most $2$ specialists of the same depth (Lemma~\ref{lem:BTBmax2ofsamedepth})
		an $\cluster$-covering set of minimum-complexity can have at most two specialists of depths $2,3,\ldots,d$. This set cannot contain two specialists of depth $1$ as they are siblings. This upper bounds the maximum minimum-complexity of any $\cluster$-covering set by $2(d-2) = 2\log{\frac{n}{2}}$.
	\end{proof}
	
	Finally we conclude that for any labeling $\bu\in\{-1,1\}^n$ of cut-size $\cutS$, there exists $\cC_{\bu} \subseteq \BTB_{n}$ such that $|\cC_{\bu}| \le 2 \log_2{(\frac{n}{2})}(\cutS+1) $.

	\section{Proof of Proposition~\ref{prop:hamming-bound}}\label{sec:hamming-bound-prop-proof}
	
	First recall the proposition statement.
	\begingroup
	\def\thetheorem{\ref{prop:hamming-bound}}
	\begin{proposition}
		For a linearized graph $\spine$, for comparators $\bmu, \bmu'\in\Delta_{|\FB_n|}$ that are minimal-consistent with
		$\bu$ and $\bu'$ respectively,
		\begin{equation*}
		\J[\FB_n]\!\!\left(\bmu,\bmu'\right) \leq \min{\left(2H\!\!\left(\bu,\bu'\right)\!, {\cut_\spine}\!\left(\bu'\right)+1\right)}\,.
		\end{equation*}
	\end{proposition}
	\addtocounter{thm}{-1}
	\endgroup

	\begin{proof}
		We prove both inequalities separately. We first prove $\J[\FB_n](\bmu,\bmu')\leq \cut_{\spine}(\bu')+1$. This follows directly from the fact that $\J(\bmu,\bmu') := |\{\Specna\in\spset: \mu_{\Specna}= 0, \mu_{\Specna}'\neq 0\}|$ and therefore $\J[\FB_n](\bmu,\bmu')\leq |\{\Specna\in\FB_n : \mu_{\Specna}' \neq 0\}| = \cut_{\spine}(\bu')+1$.
		
		We now prove $\J[\FB_n](\bmu,\bmu')\leq 2H\!\!\left(\bu,\bu'\right)$.
		Recall that if $\bu\neq\bu'$ then by definition of the minimal-consistent comparators $\bmu$ and $\bmu'$, the set $\{\Specna\in\FB_n: \mu_{\Specna}= 0, \mu'_{\Specna}\neq 0\}$ corresponds to the set of maximum-sized contiguous segments of vertices in $\spine$ sharing the same label in the labeling $\bu'$ that did not exist in the labeling $\bu$. From here on we refer to a maximum-sized contiguous segment as just a contiguous segment.
		
		When switching from labeling $\bu$ to $\bu'$, we consider the following three cases. First when a non-cut edge (with respect to $\bu$) becomes a cut edge (with respect to $\bu'$), second when a cut edge (with respect to $\bu$) becomes a non-cut edge (with respect to $\bu'$), and lastly when a cut edge remains a cut edge, but the labeling of the two corresponding vertices are `swapped'.
		
		Formally then, for an edge $(i,j)\in E_\spine$ such that $[u_i = u_j] \land [u_i' \neq u_j']$ there exists two new contiguous segments of vertices sharing the same label that did not exist in the labeling $\bu$, their boundary being the edge $(i,j)$.
		
		Conversely for an edge $(i,j)\in E_\spine$ such that $[u_i \neq u_j] \land [u_i' = u_j' ]$ there exists one new contiguous segment of vertices sharing the same label that did not exist in the labeling $\bu$, that segment will contain the edge $(i,j)$.
		
		Finally for an edge $(i,j)\in E_\spine$ such that $[[u_i \neq u_j] \land [u_i' \neq u_j' ]] \land [[u_i \neq u_i'] \lor [u_j \neq u_j' ]]$ there exists two new contiguous segments of vertices sharing the same label that did not exist in the labeling $\bu$, their boundary being the edge $(i,j)$.
		
		We conclude that the number of new contiguous segments of vertices sharing the same label that did not exist in the labeling $\bu$ is upper bounded by
		\begin{equation*}
		2\sum\limits_{(i,j)\in E_{\spine}}[[u_i \neq u_j] \lor [u_i' \neq u_j' ]] \land [[u_i \neq u_i'] \lor [u_j \neq u_j' ]]\,.		
		\end{equation*}
	\end{proof}

	\section{Proof of Corollary~\ref{co:ss}}\label{sec:maincor}
	First recall the corollary statement.
	
	\begingroup
	\def\thetheorem{\ref{co:ss}}
	\begin{corollary}
		For a connected $n$-vertex graph $\cG$ and with randomly sampled spine $\spine$, the number of mistakes made in predicting the online sequence $(i_1,y_1),\ldots,(i_{T},y_{T})$ by the \SCS\ algorithm with optimally-tuned $\alpha$ is upper bounded with basis $\FB_n$ by
		\begin{align*}
		&	\mathcal{O}\left( \cut_1\log{n} + \sum_{i=1}^{|\sset| - 1}H(\bu_{k_i},\bu_{k_{i+1}})\left(\log{n}+ \log{|\sset|} + \log{\log{T}}\right)\right)\,
		\end{align*}
		and with basis $\BTB_n$ by 
		\begin{align*}
		&	\mathcal{O}\left(\left(\cut_1\log{n}  + \sum_{i=i}^{|\sset| - 1}H(\bu_{k_i},\bu_{k_{i+1}})\left(\log{n}+ \log{|\sset|} + \log{\log{T}}\right)\right)\log{n}\right)\,
		\end{align*}
		for any sequence of labelings $\bu_1,\ldots,\bu_T \in \{-1,1\}^n$ such that  $u_{t,i_t} = y_t$ for all $t\in [T]$.
	\end{corollary}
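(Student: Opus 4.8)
The plan is to instantiate the abstract bound~\eqref{TheoremBasicBound} of Theorem~\ref{MainTheorem} at the two concrete bases, feeding in $|\FB_n|=n^2+n$ and $|\BTB_n|=\Theta(n)$, the covering and divergence estimates of Propositions~\ref{pro:hbasis} and~\ref{prop:hamming-bound}, and a post-hoc optimal choice of the switching parameter $\alpha$. Since Theorem~\ref{MainTheorem} holds for \emph{any} sequence of consistent well-formed comparators, I would first fix, for each labeling $\bu_t$ of the given sequence, a \emph{minimal-consistent} well-formed comparator $\bmu_t$; this is exactly the hypothesis of Proposition~\ref{prop:hamming-bound}, and it also pins down $\pi_t=\bmu_t(\Ct)$. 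Because a well-formed comparator spreads equal mass over a disjoint cover and exactly one covering specialist is active and correct on each trial, $1/\pi_t$ equals the size of the support of $\bmu_t$, i.e.\ the size of the minimal cover $|\cC_{\bu_t}|$. For $\FB_n$ this is $\cut_{k(t)}+1$ where $k(t)$ indexes the segment containing $t$, and $\log|\FB_n|=O(\log n)$.

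With these substitutions the three terms of~\eqref{TheoremBasicBound} become, for $\FB_n$: a leading term $\frac{1}{\pi_1}\log|\FB_n|=(\cut_1+1)\,O(\log n)=O(\cut_1\log n)$; a switching term $\sum_i \J[\FB_n](\bmu_{k_i},\bmu_{k_{i+1}})\log\frac{|\FB_n|}{\alpha}$, which Proposition~\ref{prop:hamming-bound} bounds by $O\!\big(\sum_i H(\bu_{k_i},\bu_{k_{i+1}})\big)(\log n+\log\tfrac1\alpha)$; and a holding term $\sum_t \frac1{\pi_t}\log\frac1{1-\alpha}$. For the last I would use $\log\frac1{1-\alpha}\le \frac{\alpha}{(1-\alpha)\ln 2}$ together with the conservativeness of Algorithm~\ref{Main_Alg}: the share update fires only on the $M$ mistake trials, so the sum ranges over mistakes and equals $\frac{\alpha}{(1-\alpha)\ln 2}\,B$ with $B:=\sum_{t:\,\predyt\ne\yt}(\cut_{k(t)}+1)$.

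Writing $D:=\sum_i \J[\FB_n](\bmu_{k_i},\bmu_{k_{i+1}})$, the next step is to choose $\alpha$ to balance the holding and switching terms; the optimal scaling is $\alpha\asymp D/B$, leaving a residual of order $D+D\log\frac{B}{D}$ in addition to the already-collected $O(\cut_1\log n)$ and $D\,O(\log n)$ contributions. \textbf{The main obstacle is that $B$ depends on the unknown mistake count $M$}, so the resulting estimate is \emph{self-referential} in $M$, exactly as in the proof of Theorem~\ref{thm:bayes}. I would resolve this by bounding $B$ crudely (each $1/\pi_t\le n$, and the number of segments is $|\sset|$) to reach an inequality of the shape $M\le Z+D\log\frac{M}{D}$, then iterate this substitution into itself and apply $\log(a+b)\le\log a+\log b$ for $a,b\ge 2$ together with $M\le T$. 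This collapses the offending $\log M$ into $\log\log T$; tracking the tuning ratio $B/D$, whose numerator aggregates over all mistakes while its denominator aggregates only over the $|\sset|-1$ switches, is what contributes the $\log|\sset|$ factor. Combining, the $\FB_n$ bound emerges with the Hamming terms carrying the factor $\log n+\log|\sset|+\log\log T$.

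Finally, the $\BTB_n$ bound follows by re-running the identical argument with the $\FB_n$ estimates replaced by their $\BTB_n$ analogues. Here $\log|\BTB_n|=O(\log n)$ as before, but Proposition~\ref{pro:hbasis} inflates every minimal cover, hence each $1/\pi_t$, by the factor $2\lceil\log_2\frac n2\rceil=O(\log n)$; correspondingly each newly created contiguous segment now requires $O(\log n)$ specialists, so that $\J[\BTB_n]\le O(\log n)\,\J[\FB_n]=O(H\log n)$ via the segment-counting inside the proof of Proposition~\ref{prop:hamming-bound}. These two extra $\log n$ factors pass through the self-referential tuning unchanged (they are constants with respect to the iteration on $M$), multiplying the whole $\FB_n$ bound by $\log n$ and yielding the stated result.
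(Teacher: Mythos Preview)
Your proposal is correct and follows essentially the same route as the paper: instantiate Theorem~\ref{MainTheorem} with minimal-consistent comparators, exploit the conservative update to replace $T$ by $M$ and obtain a self-referential inequality, tune $\alpha$ optimally, iterate the inequality once using $\log(a+b)\le\log a+\log b$ to collapse $\log M$ into $\log\log T$, and plug in Propositions~\ref{pro:hbasis} and~\ref{prop:hamming-bound} for the basis-specific constants (with $\J[\BTB_n]\le O(\log n)\,\J[\FB_n]$ handling the second basis). One small clarification: the $\log|\sset|$ factor does not come from the tuning ratio $B/D$ itself (that contributes a \emph{negative} $-\log(|\sset|-1)$), but rather from the $D\log D$ term that the iteration produces, via the crude bound $D=\sum_i \J[\FB_n]\le O(n|\sset|)$.
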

	\addtocounter{thm}{-1}
	\endgroup
	\begin{proof}
		Since Algorithm~\ref{Main_Alg} has a conservative update, we may ignore trials on which no mistake is made and thus from the point of view of the algorithm a mistake is made on every trial, we will therefore assume that $T=M$. This will lead to a self-referential mistake bound in terms of the number of mistakes made which we will then iteratively substitute into itself. 
		
		Let $c:= \log_2{e}$, we will use the fact that $\log_2{(\frac{1}{1-\frac{x}{y+x}})}\leq c\frac{x}{y}$ for $x, y > 0$. We will first optimally tune $\alpha$ to give our tuned mistake bound for a general basis set $\spset$, and then derive the bounds for bases $\FB_{n}$ and $\BTB_{n}$ respectively.
		The value of $\alpha$ that minimizes~\eqref{TheoremBasicBound} is 
		\begin{equation}\label{optimal-alpha}
		\alpha = \frac{\sum\limits_{i=1}^{|\sset|-1}\J\!\left(\bmu_{k_i},\bmu_{k_{i+1}}\right)}{\sum\limits_{t=1}^{T}\frac{1}{\pi_t} + \sum\limits_{i=1}^{|\sset|-1}\J\!\left(\bmu_{k_i},\bmu_{k_{i+1}}\right)}\,,
		\end{equation}
		which when substituted into the second term of~\eqref{TheoremBasicBound} gives
		\begin{equation}\label{TheoremBasicBound-alpha-first-sub}
		M_{\spset}\leq \frac{1}{\pi_1}\log{|\spset|} + c\sum_{i=1}^{|\sset|-1}\J\!\left(\bmu_{k_i},\bmu_{k_{i+1}}\right)	+ \sum_{i=1}^{|\sset|-1}\J\!\left(\bmu_{k_i},\bmu_{k_{i+1}}\right)\log{\frac{|\spset|}{\alpha}}\,.
		\end{equation}	
		We now upper bound $\frac{1}{\alpha}$ for substitution in the last term of~\eqref{TheoremBasicBound-alpha-first-sub} for bases $\FB_n$ and $\BTB_n$ separately.
		
		\paragraph{Basis $\FB_n$ :}For $\FB_n$ observe that $|\spset|=n^2 + n$, and since any labeling $\bu_t\in\{-1,1\}^n$ of cut-size $\cutS[\bu_t]$ is covered by $\cutS[\bu_t]+1$ specialists, we have that $\pi_t = 1/(\cutS[\bu_t] + 1)$ on all trials. We let the number of mistakes made by \SCS\ with basis $\FB_{n}$ be denoted by $M_{\FB_n}$. Thus~\eqref{TheoremBasicBound-alpha-first-sub} immediately becomes
		\begin{equation}\label{TheoremBasicBound-alpha-sub-Fn}
		M_{\FB_n} \leq \left(\cut_1 + 1\right)\log{|\FB_n|}
		+ c\sum_{i=1}^{|\sset|-1}\!\!\J[\FB_n]\!\!\left(\bmu_{k_i},\bmu_{k_{i+1}}\right)
		+\sum_{i=1}^{|\sset|-1}\!\!\J[\FB_n]\!\!\left(\bmu_{k_i},\bmu_{k_{i+1}}\right)\log{\frac{|\FB_n|}{\alpha}}\,.
		\end{equation}
		
		To upper bound $\frac{1}{\alpha}$ we note that if $\bmu_{k_i}\neq \bmu_{k_{i+1}}$ then $\J[\FB_n]\!\!\left(\bmu_{k_i},\bmu_{k_{i+1}}\right)\geq 1$, and that for $\FB_n$, $\frac{1}{\pi_i} =\cut_{k_i} + 1 \leq n$, thus from~\eqref{optimal-alpha} we have
		\begin{align*}
		\frac{1}{\alpha} &= 1 + \frac{\sum\limits_{t=1}^{T}\frac{1}{\pi_t}}
		{\sum\limits_{i=1}^{|\sset|-1}\J[\FB_n]\!\!\left(\bmu_{k_i},\bmu_{k_{i+1}}\right)} \leq 1 + \frac{nT}{|\sset|-1} \leq \frac{nT + |\sset|-1}{|\sset|-1} \leq\frac{(n+1)T}{|\sset|-1}\,.
		\end{align*}
		Substituting $\frac{1}{\alpha}\leq\frac{(n+1)T}{|\sset|-1}$ into~\eqref{TheoremBasicBound-alpha-sub-Fn} gives
		\begin{align}\label{eq:FB-tuned-bound-1}
		M_{\FB_n} &\leq \left(\cut_1 + 1\right)\log{|\FB_n|}+\!\!\sum_{i=1}^{|\sset|-1}\!\!\J[\FB_n]\!\!\left(\bmu_{k_i},\bmu_{k_{i+1}}\right)\left[\log{\left(e|\FB_n|\right)} + \log{\left(n+1\right)} + \log{\frac{T}{|\sset|-1}}\right]
		\end{align}
		We now show our method to reduce the $\log{T}$ term in our bound to $\log{\log{T}}$ by substituting the self-referential mistake bound into itself. We first simplify~\eqref{eq:FB-tuned-bound-1} and substitute $T=M_{\FB_n}$,
		\newcommand{\jj}{\mathcal{J}}
		\newcommand{\ZZ}{\mathcal{Z}}
		\begin{align}
		M_{\FB_n} &\leq \underbrace{\left(\cut_1 + 1\right)\log{|\FB_n|} +\sum_{i=1}^{|\sset|-1}\J[\FB_n]\!\!\left(\bmu_{k_i},\bmu_{k_{i+1}}\right)\log{\left(\frac{e|\FB_n|(n+1)}{|\sset|-1}\right)}}_{=: \ZZ }\notag\\
		&\qquad + \underbrace{\sum_{i=1}^{|\sset|-1}\J[\FB_n]\!\!\left(\bmu_{k_i},\bmu_{k_{i+1}}\right)}_{=:\jj}\log{M_{\FB_n}}\notag\\
		&\leq \ZZ + \jj\log{\left(\ZZ+ \jj\log{M_{\FB_n}}\right)}\notag\\
		&\leq \ZZ + \jj\log{\ZZ} + \jj\log{\jj} +  \jj\log{\log{M_{\FB_n}}}\notag\,,
		\end{align}
		using $\log{(a+b)} \leq \log{(a)} + \log{(b)} \text{ for } a,b \geq 2$.
		We finally use the fact that $\jj=\cO(n|\sset|)$ to give $\jj\log{\jj} = \cO(\jj\log{(n|\sset|)})$ and similarly
		\begin{align*}
		\jj\log{\ZZ} &= \cO(\jj\log{(\cut_1\log{n} + \jj\log{n})})\\
		&= \cO(\jj\log{((n + \jj)\log{n}))})\\
		&=\cO(\jj\log{(n+\jj)})\\
		&= \cO(\jj\log{(n|\sset|)})\,,
		\end{align*}
		to give 
		\begin{equation*}
		M_{\FB_n} \leq \cO \left(\cut_1 \log{n} + \sum_{i=1}^{|\sset|-1}\J[\FB_n]\!\!\left(\bmu_{k_i},\bmu_{k_{i+1}}\right)\left(\log{n} + \log{|\sset|}+ \log{\log{T}}\right)\right)\,.
		\end{equation*}

		\paragraph{Basis $\BTB_n$:} For $\BTB_n$ we apply the same technique as above, but first observe the following. Without loss of generality assume $n=2^r$ for some integer $r$, we then have $|\spset| = 4n - 2$. We let the number of mistakes made by \SCS\ with basis $\BTB_{n}$ be denoted by $M_{\BTB_n}$. Thus for basis $\BTB_n$~\eqref{TheoremBasicBound-alpha-first-sub} becomes
		\begin{align}\label{TheoremBasicBound-alpha-sub-Bn}
		M_{\BTB_n} &\leq 2\log{\frac{n}{2}}\left(\cut_1 + 1\right)\log{|\BTB_n|}
		+ c\sum_{i=1}^{|\sset|-1}\J[\BTB_n]\!\!\left(\bmu_{k_i},\bmu_{k_{i+1}}\right)
		+\sum_{i=1}^{|\sset|-1}\J[\BTB_n]\!\!\left(\bmu_{k_i},\bmu_{k_{i+1}}\right)\log{\frac{|\BTB_n|}{\alpha}}\,.
		\end{align}
		Recall proposition~\ref{pro:hbasis} (that $|\cC_{\bu}| \le 2 \log_2{(\frac{n}{2})}(\cutS+1) $) and since $\pi_t = \frac{1}{|\cC_{\bu}|}$, then for any labeling $\bu_t\in\{-1,1\}^n$ of cut-size $\cutS[\bu_t]$ we have $\frac{1}{2 (\cutS[\bu_t] +1) \log \frac{n}{2}}\leq \pi_t \leq \frac{1}{\cutS[\bu_t] + 1} $. We then apply the same argument upper bounding $\frac{1}{\alpha}$, 
		\begin{align*}
		\frac{1}{\alpha} &= 1 + \frac{\sum\limits_{t=1}^{T}\frac{1}{\pi_t}}
		{\sum\limits_{i=1}^{|\sset|-1}\J[\BTB_n]\!\!\left(\bmu_{k_i},\bmu_{k_{i+1}}\right)}\\
		&\leq 1 + \frac{2n\log{\left(\frac{n}{2}\right)}T}{|\sset|-1}
		\leq \frac{2n\log{\left(\frac{n}{2}\right)}T + |\sset|-1}{|\sset|-1}
		\leq \frac{\left(2n\log{\left(\frac{n}{2}\right)}+1\right)T}{|\sset|-1}\,,
		\end{align*}
		and substituting $\frac{1}{\alpha}\leq\frac{(2n\log{(\frac{n}{2})}+1)T}{|\sset|-1}$ into the last term of~\eqref{TheoremBasicBound-alpha-sub-Bn} gives
		\begin{align*}
		M_{\BTB_n} &\leq
		2\log_2{\frac{n}{2}}\left(\cut_1 + 1\right)\log{|\BTB_n|} + \\
		&\quad\sum_{i=1}^{|\sset|-1}\J[\BTB_n]\!\!\left(\bmu_{k_i},\bmu_{k_{i+1}}\right)\left[c + \log{|\BTB_n|} + \ln{2n} + \log{\frac{T}{|\sset|-1}} + \log{\log{n}}\right]\,.
		\end{align*}
		Applying the same recursive technique as above yields a bound of 
		\begin{equation*}
		M_{\BTB_n} \leq \cO \left(\cut_1\left(\log{n}\right)^2 + \sum_{i=1}^{|\sset|-1}\J[\BTB_n]\!\!\left(\bmu_{k_i},\bmu_{k_{i+1}}\right)\left(\log{n} + \log{|\sset|}+ \log{\log{T}}\right)\right)\,.
		\end{equation*}
		Using the same argument given in proposition~\ref{pro:hbasis} for any two labelings $\bu,\bu'\in\{-1,1\}^n$, for two consistent well-formed comparators $\bmu, \bmu' \in\Delta_{|\BTB_n|}$ respectively, and for two consistent well-formed comparators $\hat{\bmu}, \hat{\bmu}'\in\Delta_{|\FB_n|}$, we have that  $\J[\BTB_n]\!(\bmu,\bmu')\leq2\log{\frac{n}{2}}\J[\FB_n]\!(\hat{\bmu}, \hat{\bmu}')$. Finally we use $\J[\FB_n] \leq 2H(\bu,\bu')$ from Proposition~\ref{prop:hamming-bound} to complete the proof.
	\end{proof}

	\section{Proof of Proposition~\ref{co:alpha-t-bounds}}\label{sec:alpha-t-proof}
	\begin{proof}
		Using a time-dependent $\alpha$ we can re-write~\eqref{TheoremBasicBound} as
		\begin{equation}
		M_{\spset}\leq \frac{1}{\pi_1}\log{|\spset|} + \sum_{t=1}^{T}\frac{1}{\pi_t}\log{\frac{1}{1-\alpha_t}} + \sum_{i=1}^{|\sset|-1}\J\!\left(\bmu_{k_i},\bmu_{k_{i+1}}\right)\log{\frac{|\spset|}{\alpha_{k_{i+1}}}}\,,
		\end{equation}
		and letting $\alpha_t:=\frac{1}{t+1}$, and letting $c:= \log_2{e}$, gives the following, 
		\begin{align}
		M_{\spset} &\leq \frac{1}{\pi_1}\log{|\spset|} + \sum_{t=1}^{T}\frac{1}{\pi_t}\log{\frac{1}{1-\frac{1}{t+1}}} + \sum_{i=1}^{|\sset|-1}\J\!\left(\bmu_{k_i},\bmu_{k_{i+1}}\right)\log{\left(|\spset|\left(k_{i+1} + 1\right)\right)}\label{eq:log(1+x)-before}\\
		&\leq \frac{1}{\pi_1}\log{|\spset|} + c\sum_{t=1}^{T}\frac{1}{\pi_t}\frac{1}{t} + \sum_{i=1}^{|\sset|-1}\J\!\left(\bmu_{k_i},\bmu_{k_{i+1}}\right)\log{\left(|\spset|T\right)}\label{eq:log(1+x)-after}\\
		&\leq \frac{1}{\pi_1}\log{|\spset|} + c\left(\max\limits_{t\in[T]}\frac{1}{\pi_t}\right)\sum_{t=1}^{T}\frac{1}{t} + \sum_{i=1}^{|\sset|-1}\J\!\left(\bmu_{k_i},\bmu_{k_{i+1}}\right)\log{\left(|\spset|T\right)}\label{eq:max-pi_t}\\
		&\leq \frac{1}{\pi_1}\log{|\spset|} + \left(\max\limits_{t\in[T]}\frac{1}{\pi_t}\right)\log{\left(eT\right)}+ \sum_{i=1}^{|\sset|-1}\J\!\left(\bmu_{k_i},\bmu_{k_{i+1}}\right) \log{\left(|\spset|T\right)}\label{eq:adaptive-alpha-bound}
		\end{align}
		where the step from~\eqref{eq:log(1+x)-before} to~\eqref{eq:log(1+x)-after}	has used $\log_2{(1+x)}\leq cx$ for $x>0$, and the step from~\eqref{eq:max-pi_t} to~\eqref{eq:adaptive-alpha-bound} has used $\sum\limits_{t\in[T]}\frac{1}{t} < \int_{1}^{T}\frac{1}{t}dt + 1 = \ln{(eT)}=\frac{1}{c}\log_{2}{(eT)}$.
		
		We now use the following upper bound on $\max\limits_{t\in[T]}\frac{1}{\pi_t}$,
		\begin{equation*}
		\max\limits_{t\in[T]}\frac{1}{\pi_t}\leq \frac{1}{\pi_1} + \sum\limits_{i=1}^{|\sset|-1}\J(\bmu_{k_i},\bmu_{k_{i+1}})\,,
		\end{equation*}
		and the assumption that $\sum\limits_{i=1}^{|\sset|-1}\J(\bmu_{k_i},\bmu_{k_{i+1}}) \geq \frac{1}{\pi_1}$, to give
		\begin{equation}
		\max\limits_{t\in[T]}\frac{1}{\pi_t}\leq 2\sum\limits_{i=1}^{|\sset|-1}\J(\bmu_{k_i},\bmu_{k_{i+1}})\,.\label{eq:max-pi-bound}
		\end{equation}
		Substituting~\eqref{eq:max-pi-bound} into~\eqref{eq:adaptive-alpha-bound} then gives
		\begin{align*}
		M_{\spset} &\leq \frac{1}{\pi_1}\log{|\spset|} + 2\sum_{i=1}^{|\sset|-1}\J\!\left(\bmu_{k_i},\bmu_{k_{i+1}}\right)\left(\log{\left(eT\right)}+ \frac{1}{2} \log{\left(|\spset|T\right)}\right)\notag\\
		&= \frac{1}{\pi_1}\log{|\spset|} + 2\sum_{i=1}^{|\sset|-1}\J\!\left(\bmu_{k_i},\bmu_{k_{i+1}}\right)\left(\frac{1}{2} \log{\left(|\spset|\right)} + \log{\left(e\right)}+ \frac{3}{2}\log{\left(T\right)}\right)
		\end{align*}
		
		Using a conservative update (see section~\ref{sec:cluster-specialists}), we similarly set $\alpha_t:= \frac{1}{m+1}$, where $m$ is the current number of mistakes of the algorithm. We next use the same `recursive trick' as that in the proof of Corollary~\ref{co:ss}. The proof follows analogously, leaving 
		\begin{equation*}
		M_{\FB_n} \leq \cO \left(\cut_1\log{n} + \sum_{i=1}^{|\sset|-1}\J[\FB_n]\!\!\left(\bmu_{k_i},\bmu_{k_{i+1}}\right)\left(\log{n} + \log{|\sset|}+ \log{\log{T}}\right)\right)
		\end{equation*}
		for the basis set $\FB_n$, and 
		\begin{equation*}
		M_{\BTB_n} \leq \cO \left(\cut_1\left(\log{n}\right)^2 + \sum_{i=1}^{|\sset|-1}\J[\BTB_n]\!\!\left(\bmu_{k_i},\bmu_{k_{i+1}}\right)\left(\log{n} + \log{|\sset|}+ \log{\log{T}}\right)\right)
		\end{equation*}
		for the basis set $\BTB_n$.		
	\end{proof}

	\section{The Switching Graph Perceptron}\label{sec:perceptron-appendix}
	In this section for completeness we provide the kernelized Perceptron algorithm for switching graph prediction. The algorithm is described and a mistake bound given for the switching-graph labeling problem in~\cite[Sec. 6.2]{Herbster-SwitchingGraphs}.
	
	\begin{algorithm2e}[t]
		\SetAlgoVlined
		\DontPrintSemicolon
		\SetKwInOut{Input}{input}
		\SetKwInOut{Init}{initialize}
		\SetKwInOut{Parameter}{parameter }
		\SetKw{Predict}{predict}
		\SetKw{Receive}{receive}
		\Input{ Graph $\graph$\;}
		\Parameter{ \(\gamma >0\)}
		\Init{ \(\wt[1] \gets \bm{0}\)\;} 
		\(\bK \gets \bL^{+}_\graph + \max_{i\in [n]} (\be^{\top}_{i} \bL^{+}_\graph \be_{i})\bm{1}\bm{1}^{\top}\)\;
		\For{$t=1$ \KwTo $T$}{
			\Receive{\(i_{t}\in V\)\;}
			\Predict{\(\predyt \gets \text{sign}(w_{t,{i_t}})\)\;}
			\Receive{\(y_{t}\in\{-1,1\}\)\;}
			\If{\(\predyt\neq y_{t}\)}{
				\(\wtm \gets \wt + \yt\frac{\bK\be_{i_t} }{\bK_{i_t,i_t}}\hfill\)\;
				\eIf{\(\|\wtm\|_{\bm{K}} > \gamma\)}{
					\(\wt[t+1] \gets \frac{\wtm}{\|\wtm\|_{\bm{K}}}\gamma\)\;
				}{
					\(\wt[t+1] \gets \wtm\)
				}
			}
		}	\caption{\mbox{\small{\sc Switching Graph Perceptron}}}\label{SGP-algorithm}
	\end{algorithm2e}

	The key to the approach is to use the following graph kernel  (introduced by~\cite{Herbster-GraphPerceptron}) $\bK := \bm{L}^{+}_{\graph}+R_L\bm{1}\bm{1}^{\top}$ with $R_L := \max_{i} (\be^{\top}_{i} \bL^{+}_\graph \be_{i})$, where $\bm{L}^{+}_{\graph}$ denotes the pseudo-inverse of the graph Laplacian, and for $i\in [n]$, we let $\be_i$ denote the $i$-th unit basis vector, i.e., $e_{i,i'}=0$ if $i\neq i'$ and equals $1$ if $i'=i$. The norm induced by this kernel is denoted $\|\bm{u}\|_{\bm{K}} := \sqrt{\bu^{\transpose} \bK^{-1} \bu}$.
	


	\section{Further Details on Experiments}\label{sec:appendix-experiment-details}
	\begin{table*}[!th]
	\begin{center}
		\caption{Parameter ranges used for optimizing the three algorithms with tunable parameters.}
		\label{table:parameter-ranges}
		\begin{tabular}{lccc}
			\toprule
			Algorithm & Parameter & Parameter Range & Optimized Parameter\\
			\midrule
			Kernel Perceptron & $\gamma$ & $3.5- 5$ &  $3.89$\\
			\FS 				& $\alpha$ & $1\times10^{-12} - 1\times10^{-6}$& $7.4\times10^{-10}$\\
			\BS					& $\alpha$ & $1\times10^{-5} - 5\times10^{-4}$& $3.0\times10^{-4}$\\
			\bottomrule
		\end{tabular}
	\end{center}
	\end{table*}

	In this section we give further details on the experimental methods of Section~\ref{sec:Experiments}. Data was collected spanning 72 hours from $4:55am$ on $8^{th}$ April $2019$ to $4:55$am on $11^{th}$ April $2019$. Any stations that were not in service during any of the $72$ hours were removed (in this case there was only one such station). 
	
	As described in Section~\ref{sec:Experiments}, the variable measured was the percentage of occupied docks in each station, and a threshold of $50\%$ was set to induce a binary labeling. Any stations whose induced labeling did not change over the $72$ hours were also removed from the dataset. This left a graph of $404$ stations.
	
	The first $24$ hours of data were used for parameter selection. 
	Parameters were tuned using exhaustive search over the ranges specified in Table~\ref{table:parameter-ranges}, taking the mean minimizer over $10$ iterations. 

	\begin{table*}[!th]
		\begin{center}
			\begin{tiny}
			\caption{Mean error $\pm$ std over $25$ iterations on a $404$-vertex graph for all algorithms and benchmarks, and for all ensemble sizes of \FS and \BS. }        
			\label{table:results}
			\begin{tabular}{l|ccccccc}
				\toprule
				\multicolumn{1}{c|}{}    &   \multicolumn{7}{c}{Ensemble Size}\\
				\multicolumn{1}{c|}{Algorithm} & \multicolumn{1}{c}{1}    &   \multicolumn{1}{c}{3}   &   \multicolumn{1}{c}{5}   & \multicolumn{1}{c}{9} & \multicolumn{1}{c}{17} & \multicolumn{1}{c}{33}& \multicolumn{1}{c}{65}\\
				\midrule 
				\FS&	 $1947 \pm 49$	&	 $1597 \pm 32$	&	 $1475 \pm 30$	&	 $1364 \pm 28$	&	 $1293 \pm 26$	&	 $1247 \pm 21$	&	$1218 \pm 19$\\
				\BS&	 $1438 \pm 32$	&	 $1198 \pm 27$	&	 $1127 \pm 25$	&	 $1079 \pm 24$	&	 $1050 \pm 23$	&	 $1032 \pm 22$&$1021 \pm 18$\\
				Kernel Perceptron&	 $3326 \pm 43$	&	 -			&	 -		&	 -		&	 -			&	 -		&-\\
				Local&	 $3411 \pm 55$	&	 -			&	 -		&	 -		&	 -		&	 -		&-\\
				Global&	 $4240 \pm 44$	&	 -			&	 -		&	 -		&	 -		&	 -		&-\\
				Temporal (Local)&	 $2733 \pm 42$	&	 -			&	 -		&	 -		&	 -			&	 -		&-\\
				Temporal (Global)&	 $3989 \pm 44$	&	 -			&	 -		&	 -		&	 -			&	 -		&-\\
				\bottomrule
			\end{tabular}
			\end{tiny}
		\end{center}
	\end{table*}

\end{document}